\renewcommand{\theequation}{\thesection\arabic{equation}}
\newtheorem{theorem}{Theorem}
\newtheorem{lemma}{Lemma}
\newtheorem{proposition}{Proposition}
\theoremstyle{definition}
\newtheorem{definition}{Definition}
\newtheorem{remark}{Remark}
\newcommand{\bfbeta}{\mbox{\boldmath $\beta$}}
\newcommand{\bfdelta}{\mbox{\boldmath $\delta$}}
\begin{document}

%%%%%%%%%%%%%%%%%%%%%%%%%%%%%%%%%%%%%%%%%%%%%%%%%%%%%%%%%%%%%%%%%%%%%%%%%%%%%%%%%%%%%%%%%%%%%%%%%%%%%%%%%%%%%%%%%%%%%%%%%%%%
%%%%%%%%%%%%%%%%%%%%%%%%%%%%%%%%%%%%%%%%%%%%%%%%%%%%%%%%%%%%%%%%%%%%%%%%%%%%%%%%%%%%%%%%%%%%%%%%%%%%%%%%%%%%%%%%%%%%%%%%%%%%

\renewcommand{\baselinestretch}{1}

\markright{ \hbox{\footnotesize\rm Statistica Sinica
%{\footnotesize\bf 24} (201?), 000-000
}\hfill\\[-13pt]
\hbox{\footnotesize\rm
%\href{http://dx.doi.org/10.5705/ss.20??.???}{doi:http://dx.doi.org/10.5705/ss.20??.???}
}\hfill }

\markboth{\hfill{\footnotesize\rm FIRSTNAME1 LASTNAME1 AND FIRSTNAME2 LASTNAME2} \hfill}
{\hfill {\footnotesize\rm FILL IN A SHORT RUNNING TITLE} \hfill}

\renewcommand{\thefootnote}{}
$\ $\par

%%%%%%%%%%%%%%%%%%%%%%%%%%%%%%%%%%%%%%%%%%%%%%%%%%%%%%%%%%%%%%%%%%%%%%%%%%%%%%%%%%%%%%%%%%%%%%%%%%%%%%%%%%%%%%%%%%%%%%%%%%%%

\fontsize{12}{14pt plus.8pt minus .6pt}\selectfont \vspace{0.8pc}
\centerline{\large\bf Elastic-net Regularized High-dimensional }
\vspace{2pt} \centerline{\large\bf Negative Binomial Regression:}
\vspace{2pt} \centerline{\large\bf Consistency and Weak Signal Detection}
\vspace{.3cm}
\centerline{Huiming Zhang$^{1,3,4}$, Jinzhu Jia$^{2,3}$} % \vspace{.3cm}
\vspace{.3cm}\centerline{\it School of Mathematical Sciences$^{1}$, School of Public Health$^{2}$ }
\vspace{.3cm}\centerline{\it and Center for Statistical Science$^{3}$, Peking University, Beijing, P.R. China;}
\vspace{.3cm}\centerline{\it Department of Mathematics$^{4}$, Faculty of Science and Technology,}
\vspace{.3cm}\centerline{\it University of Macau, Taipa, Macau, P.R. China.}
\vspace{.55cm} \fontsize{9}{11.5pt plus.8pt minus
.6pt}\selectfont

%%%%%%%%%%%%%%%%%%%%%%%%%%%%%%%%%%%%%%%%%%%%%%%%%%%%%%%%%%%%%%%%%%%%%%%%%%%%%%%%%%%%%%%%%%%%%%%%%%%%%%%%%%%%%%%%%%%%%%%%%%%%

\begin{quotation}
\noindent {\it Abstract:}
We study a sparse negative binomial regression (NBR) for count data by showing the non-asymptotic advantages of using the elastic-net estimator. Two types of oracle inequalities are derived for the NBR's elastic-net estimates by using the Compatibility Factor Condition and the Stabil Condition. The second type of oracle inequality is for the random design and can be extended to many $\ell_1 + \ell_2$ regularized M-estimations, with the corresponding empirical process having stochastic Lipschitz properties. We derive the concentration inequality for the suprema empirical processes for the weighted sum of negative binomial variables to show some high--probability events. We apply the method by showing the sign consistency, provided that the nonzero components in the true sparse vector are larger than a proper choice of the weakest signal detection threshold. In the second application, we show the grouping effect inequality with high probability. Third, under some assumptions for a design matrix, we can recover the true variable set with a high probability if the weakest signal detection threshold is large than the turning parameter up to a known constant. Lastly, we briefly discuss the de-biased elastic-net estimator, and numerical studies are given to support the proposal.

\vspace{9pt}
\noindent {\it Key words and phrases:}
high-dimensional count data regressions; oracle inequalities, stochastic Lipschitz condition; empirical processes; sign consistency; de-biased elastic-net.
\par
\end{quotation}\par

\def\thefigure{\arabic{figure}}
\def\thetable{\arabic{table}}

\renewcommand{\theequation}{\thesection.\arabic{equation}}

\fontsize{12}{14pt plus.8pt minus .6pt}\selectfont

\setcounter{section}{0} %***
\setcounter{equation}{0} %-1

\lhead[\footnotesize\thepage\fancyplain{}\leftmark]{}\rhead[]{\fancyplain{}\rightmark\hspace{1cm} \footnotesize\thepage}%Put this line in Page 2

\section{Introduction}
In this study, we focus on regression problems involving count data (sometimes called categorical data). The responses are denoted as $\{ {Y_i}\} _{i = 1}^n$, each of which follows a univariate discrete distribution. Here, the covariates $\{{\emph{\textbf{X}}_i} := {({x_{i1}}, \cdots ,{x_{ip}})^T}\}_{i = 1}^n \in \mathbb{R}^P$ are supposed to be a deterministic or random variable. If they are random, we can deal with the model by conditioning on design matrix $\textbf{X}:=({\emph{\textbf{X}}_1}, \cdots ,{\emph{\textbf{X}}_n})^T$. The conditional expectation of $Y_i|{\emph{\textbf{X}}_i^T}$ is related to ${\emph{\textbf{X}}_i^T}\bfbeta^{*}$ after a transformation using a link function, where $ \bfbeta^*  = {({\beta _1^*}, \cdots ,{\beta _p^*})^T}$ is the unknown true coefficient vector. The Poisson regression is a well-known example.  Covariates in a count data regression may take discrete or continuous values. Here, important examples includes logistic regression, Poisson regression and negative binomial regression (NBR), amony others. There are many monographs on statistical models for counting data; see for example, \cite{Hilbe11} and \cite{Tutz11}.

A commonly used regression model for count data is the Poisson generalized linear model, particularly in the economic, social, and biological sciences, see \cite{Tutz11}. A Poisson regression considers that the response variables $Y_i$'s are nonnegative integers that follow the Poisson distribution, i.e. $P({Y_i=y_i}{\left| \lambda  \right._i}) = \frac{{\lambda _i^{{y_i}}}}{{{y_i}!}}{e^{ - {\lambda _i}}}$ for $i = 1,2, \cdots ,n$, where the expectation of ${Y_i}$ is ${\lambda _i} := {\rm{E}}{(Y_i)}$. We require that the positive parameter ${\lambda _i}$ be related to a linear combination of $p$ covariate variables. Specifically, the Poisson regression assumes the logarithmic link function $\eta ({\lambda _i}) = :\log \lambda _i = {\emph{\textbf{X}}_i^T}\bfbeta^*$. Owing to the nature of the Poisson distribution, the variance is equal to the expectation: ${\rm{E(}}{Y_i}\left| {{\textit{\textbf{X}}_i}} \right.) = {\rm{Var(}}{Y_i}\left| {{\textit{\textbf{X}}_i}} \right.) = {\lambda _i}$, called \textit{equidispersion}.

However, in practice, we often encounter \textit{overdispersion}. In this case, the variance of count data is greater than the mean comparing to Poisson count data. For example in RNA-Seq gene expression data, the negative binomial (NB) distribution provides a good choice for modeling a set of count variables and related high-dimensional sets of quantitative or binary variables are of interest, that is $p \gg n$. As evidence of overdispersion, in real data, the variance of the response variable is greater
than its mean; see \cite{Rauschenberger16} and \cite{Qiu18}.  To test whether the variance of count data is greater than the expectation, \cite{Cameron1990} proposed the Cameron--Trivedi test:
\begin{align*}
{{\rm{H}}_0}{\rm{:Var(}}{Y_i}\left| \textit{\textbf{X}}_i \right.) = {\rm{E(}}{Y_i}\left| \textit{\textbf{X}}_i \right.){\rm{ = :}}{\mu _i}~~{\rm{vs.}}~~{{\rm{H}}_1}{\rm{:Var(}}{Y_i}\left| \textit{\textbf{X}}_i \right.) = {\mu _i}{\rm{ + }}\alpha g({\mu _i}),
\end{align*}
where $g({\mu _i}) = {\mu _i}$ or $g({\mu _i}) = \mu _i^2$, and the constant $\alpha $ is the value to be tested. Therefore, the hypothesis test is alternatively written as ${{\rm{H}}_0}{\rm{:~}}\alpha  = 0~~{\rm{vs.}}~~{{\rm{H}}_1}{\rm{:~}}\alpha  \ne 0$. For $\alpha  \ne 0$, the count data is overdispersed if $\alpha  > 0$, and it is underdispersed if $\alpha  < 0$. Here the \emph{underdispersion} means that the variance of the data is less than the mean, which suggests that a binomial regression (see Section 3.3.2 of \cite{Tutz11}) or a COM-Poisson regression (see \cite{Sellers2008}) should be suitable. More details on the overdispersion test can be found in Chapter 7 of \cite{Hilbe11}.

When testing for overdispersion, we have to correct the hypothetical distributions and select a flexible distribution, such as some two-parameter models. A suggested overdispersed distribution is the negative binomial (NB) distribution that is a particular case of the discrete compound Poisson (DCP) family. NB also belongs to the class of infinitely divisible distribution. For more detailed NB and DCP distributions properties, please refer to Section 5.9.3 of \cite{johnson05} and \cite{Zhang14}.

In low-- and fixed--dimensional regressions with $p<n$, researcher often use the maximum likelihood estimator (MLE) of the regression coefficients. Here, we employ the \emph{average negative log-likelihood function} of the NBR (i.e. a convex empirical process indexed by $n$):
\begin{equation*}
\ell_n(\bfbeta ) := - \frac{1}{n}\sum\limits_{i = 1}^n {[{Y_i}{{{\textit{\textbf{X}}_i^T}\boldsymbol{\beta} }}  - (\theta  + {Y_i})\log (\theta  + {e^{{\textit{\textbf{X}}_i^T}\boldsymbol{\beta} }})]},~\boldsymbol{\beta} \in \mathbb{R};
\end{equation*}
see Section~\ref{nbr}. Here, $\ell_n(\bfbeta )$ is also termed the empirical NBR loss function in the field of machine learning point. If $\theta$ is given (or treated as a tuning parameter), the NBR actually belongs to the class of generalized linear models (GLMs) with noncanonical links. It should be noted that the coefficient of ${Y_i}$ in the log-likelihood of a common GLM with a canonical link function is linear in $\textit{\textbf{X}}_i^T\bfbeta$, whereas the coefficient of ${Y_i}$ in the log-likelihood of the NBR is nonlinear in $\textit{\textbf{X}}_i^T\bfbeta$ owing to the noncanonical link function.

In a high-dimensional setting, a powerful tool for remedying the MLE is to add the penalty function to the $\ell_n (\bfbeta )$ to get the penalized (regularized) likelihood estimator. Here, we study the elastic-net regularized MLE defined as follows.
\begin{definition} \label{def-en}
(Elastic-net method of NBR) For the empirical NB loss function $\ell_n (\bfbeta )$, let ${\lambda _1},{\lambda _2}> 0$ be tuning parameters. Then, the elastic-net estimates are defined as
\begin{equation}\label{eq:en}
\boldsymbol{\hat \beta} =: \boldsymbol{\hat \beta} ({\lambda _1},{\lambda _2})=\mathop {\rm{argmin}}\limits_{\boldsymbol{\beta}   \in {\mathbb{R}^p}} \{ \ell_n (\bfbeta ) + {\lambda _1}\left\| \boldsymbol{\beta}   \right\|_1 + {\lambda _2}{\left\| \boldsymbol{\beta}  \right\|_2^2}\},
\end{equation}
where $\|\bfbeta\|_q :={( {\sum_{i = 1}^p {{{\left| {{\beta _i}} \right|}^q}} } )^{1/q}} $ is the $l_q$-norm of $\beta$, for $1 \le q < \infty $.
\end{definition}
In the section below, we usually denote ${\hat \bfbeta }$ as $\hat \bfbeta ({\lambda_1,\lambda_2})$, for simplicity.

Chapter 3 of \cite{Tutz11} begins with three important criteria for penalized estimation methods for sparse coefficient vectors:

{\normalsize{
$ {1^ \circ }$. \emph{Existence of unique estimates: this is where MLEs often fail;}

 ${2^ \circ }$. \emph{Prediction accuracy: a model should yield a decent prediction of the outcome;}

 ${3^ \circ }$. \emph{Sparseness and interpretation: a parsimonious model that contains the strongest effects is easier to interpret than a big model with hardly any structure.}
}}

For ${3^ \circ }$, as the penalty function, we study the elastic-net estimate because it enjoys the advantages of both the Lasso and the Ridge, see \cite{Zou2005}. The Lasso can only select one variable in a group of highly related variables, whereas the elastic-net can choose more than one, which we called a grouping effect. For ${1^ \circ }$ and ${2^ \circ }$, we concentrate on the nonasymptotic oracle inequalities of the elastic-net penalized MLE in NB regression because asymptotic distribution of the high-dimensional penalized estimator is usually not available. Essentially, deriving oracle inequalities is a powerful mathematical skill that
gives deep insight into an estimator's nonasymptotic fluctuation compared to that of an ideal unknown parameter (the oracle). \cite{Wang2016} compared the NBR and Poisson regression models based on the elastic-net, MCP-net, and SCAD-net penalty functions by using hospitalization days in hospitalized pediatric cardiac surgery and the associated covariates for variable selection analysis. \cite{Massaro16} constructed the elastic-net penalized NBR to analyze overdispersed count data: time-to-death (in days). Here, the elastic-net selects the genes' functional characteristics that increase or decrease the survival time in the high-dimensional scenario, as $p \gg n$. In practice, the covariates are usually corrupted because they contain unavoidable measurement errors. \cite{Sorensen2018} suggested that elastic-net penalty (or generalized elastic-net penalty with higher-order terms, such as cubic, quadratic terms, etc.) can decorrupt the corrupted covariates in high-dimensional GLMs, by choosing the second tuning parameter in the elastic-net.\\

\textbf{Contributions}:
\begin{itemize}
\item For GLMs, \cite{Bunea08} investigated the oracle inequalities in the setting of logistic and linear regression models for the elastic-net penalization schemes under the Stabil Condition. By extending the proofs from \cite{Bunea08}, \cite{Blazere14} derived oracle inequalities for GLMs with canonical link functions that do not contain the NBR. The empirical processes technique is used by \cite{Blazere14} to get the oracle inequalities for elastic-net in GLMs; however, their assumption of GLMs does not contain the NBR. Even under a fixed design, the Hessian matrix of the NB log-likelihood contains random responses. This complex phenomenon is substantially different from the canonical link GLMs. Additional treatments for the concentration of a random Hessian matrix are needed. To show the KKT-like event with high probability, we propose a new concentration inequality for the superma of multiplier NB empirical processes.

 \item  \cite{Geer2008} mainly studied the oracle inequalities for high-dimensional GLMs with Lipschitz loss functions. However, the loss of NBR is not Lipschitz owing to the unbounded responses. To handle the non-Lipschitz loss, we have to ensure the stochastic Lipschitz property (see \cite{Chi10}) of the NB loss with high probability. Thus we derive oracle inequalities for elastic-net estimates for the NBR under the Compatibility Factor Condition or Stabil Condition, which differs from the conditions in \cite{Geer2008}.

 \item Apart from the $\ell_{1}$ consistency, few studies focus the sign consistent (\cite{Zhao06}) of the elastic-net type estimators, see \cite{Jia2010} for the linear model, and \cite{Yu2010} for the Cox model. Based on the bounded covariates assumption,  we study the sign consistency of an elastic-net regularized NBR without using the \emph{Irrepresentable Condition} in \cite{Zhao06}.

\end{itemize}

We examine the theoretical properties of the elastic-net methods for a sparse estimator in the NBR within the framework of the nonasymptotic theory. Section~\ref{nbr} and Section~\ref{KKT} present a review of the NBR and KKT conditions. In Section~\ref{compatibility} and \ref{Stabil}, we show that two types of oracle inequalities can be derived for $\ell_{1}$ estimation and prediction error bound under the assumption of the Compatibility Factor Condition or Stabil Condition with measurement errors. The remaining sections are byproducts of our proposed oracle inequalities. We establish a uniform bound for the grouping effect in Section~\ref{sec-GE}. To obtain the sign consistency in Section~\ref{sec-IC0}, we require a uniform signal strength in order to detect coefficients larger than a constant multiplied by the tuning parameter of the $\ell_{1}$ penalty. Using the weakest signal condition, in Section~\ref{signals}, we find that the probability of correct inclusion for all true variables in the selected set and the probability of corrected subset selection are high. We discuss de-biased elastic-net regularized M-estimators for low-dimensional parameters in Section~\ref{de-biased}. All proofs of the main theorems, lemmas, and propositions are given in Appendix S1, and the assisted lemmas are presented in Appendix S2. Simulation studies are provided in Appendix S3.

\section{High-dimensional NBR}
\label{High}
In the following two subsections, we review the negative binomial GLMs and the corresponding mathematical optimization problems.

\subsection{NBR}\label{nbr}

 The probability mass function of the negative binomial distribution random variable  is ${p_y} = :P(Y = y) = \frac{{\Gamma (y + \theta)}}{{\Gamma (\theta)y!}}{(1 - p)^\theta}{p^y},(p \in (0,1),y \in \mathbb{N})$. The expectation and variance of the NB distribution are $\frac{{\theta p}}{{1 - p}}$ and $\frac{{\theta p}}{{{{(1 - p)}^2}}}$, respectively. If $\theta$ is a positive integer, it is called a Pascal distribution. This special case of the NB is modeled as the number of failures $Y = y$ before the $\theta$-th success in repeated mutually independent Bernoulli trials (with success probability $1-p$). Here, $\theta$ is a positive integer or real number.

 In the regression setting, one type of NBR assumes that the count data response obeys the NB distribution (denoted as $Y \sim {\rm{NB}}({\mu _i},\theta)$) with over-dispersion:
\begin{align*}
P({Y_{{i}}} &= {y_i}|\textit{\textbf{X}}_i) = :f({y_i},\theta ,{\mu _i}) = \frac{{\Gamma (\theta  + {y_i})}}{{\Gamma (\theta ){y_i}!}}{(\frac{{{\mu _i}}}{{\theta  + {\mu _i}}})^{{y_i}}}{(\frac{\theta }{{\theta  + {\mu _i}}})^\theta },(i = 1,2, \cdots ,n)
\end{align*}
Here, ${\rm{E(}}{Y_i}\left| \textit{\textbf{X}}_i \right.) = {\mu _i}$ and ${\rm{Var(}}{Y_i}\left| \textit{\textbf{X}}_i \right.) = {\mu _i} + \frac{{\mu _i^2}}{\theta }$. The $\theta $ is a qualification of the level of overdispersion that underlies a count data set. Furthermore, $\theta $ is assumed as the known dispersion parameter which can be estimated (see Section 8 of \cite{Hilbe11}). When the mean parameter ${\mu _i}$ and the covariates are linked by ${\rm{log}}{\mu _i} = {\textit{\textbf{X}}}_i^T\boldsymbol{\beta}^{*}$, we have an NBR. When $\theta  \to  + \infty $, ${\rm{Var(}}{Y_i}\left| \textit{\textbf{X}}_i \right.) \to {\mu _i}{\rm{ = E(}}{Y_i}\left| \textit{\textbf{X}}_i \right.)$. Thus, the Poisson regression is a limiting case of the NBR when the dispersion parameter tends to infinite. Because overdispersion occurs in real data, the NBR can be more powerful and interpretable than a Poisson regression.

The log-likelihood function of the NB responses is:
\begin{align*}
&~~L(\emph{\textbf{Y}};\boldsymbol{\beta} ) = \log [\prod \limits_{i = 1}^n {f({Y_i},\theta ,{\mu _i})} ] = \sum\limits_{i = 1}^n {\log } \{ \frac{{\Gamma (\theta  + {Y_i})}}{{\Gamma (\theta ){Y_i}!}}{(\frac{{{\mu _i}}}{{\theta  + {\mu _i}}})^{{Y_i}}}{(\frac{\theta }{{\theta  + {\mu _i}}})^\theta }\} \\
& = \sum\limits_{i = 1}^n {\{ \log \Gamma (\theta  + {Y_i}) + {Y_i}\log {\mu _i} + \theta \log \theta  - \log \Gamma (\theta )} - \log {Y_i}! - (\theta  + {Y_i})\log (\theta  + {\mu _i})\}  \\
&= c_0 + \sum\limits_{i = 1}^n {[{Y_i}{{{\textit{\textbf{X}}_i^T}\boldsymbol{\beta} }}  - (\theta  + {Y_i})\log (\theta  + {e^{{\textit{\textbf{X}}_i^T}\boldsymbol{\beta} }})]},~\text{with a constant}~c_0.
\end{align*}
Then, take the derivative of the vector $\bfbeta $. Let $\frac{{\partial L(\emph{\textbf{Y}};\boldsymbol{\beta} )}}{{\partial \boldsymbol{\beta} }} := {\{ \frac{{\partial L(\emph{\textbf{Y}};\boldsymbol{\beta} )}}{{\partial {\beta _1}}}, \cdots ,\frac{{\partial L(\emph{\textbf{Y}};\boldsymbol{\beta} )}}{{\partial {\beta _p}}}\} ^T}$. We get the \emph{score function}
\begin{equation}\label{eq:NBEP}
{\dot \ell_n ({\bfbeta})}:= - \frac{1}{n}\frac{{\partial L(\emph{\textbf{Y}};\boldsymbol{\beta} )}}{{\partial \boldsymbol{\beta} }} =  - \frac{1}{n} \sum\limits_{i = 1}^n {{\textit{\textbf{X}}_i}\theta [} \frac{{\theta  + {Y_i}}}{{\theta  + {e^{{\textit{\textbf{X}}_i^T}\boldsymbol{\beta} }}}} - 1] = - \frac{1}{n} \sum\limits_{i = 1}^n {\frac{{{\textit{\textbf{X}}_i}({Y_i} - {e^{{\textit{\textbf{X}}_i^T}\boldsymbol{\beta}}})\theta }}{{\theta  + {e^{{\textit{\textbf{X}}_i^T}\boldsymbol{\beta} }}}}}.
\end{equation}
By setting ${\dot \ell_n ({\bfbeta })}= 0$, we obtain the solution ${\boldsymbol{\hat \beta}_{mle}}$. The second derivative is calculated as the \emph{Hessian matrix}
$
\ddot \ell_n(\bfbeta )=\frac{1}{n} \sum\limits_{i = 1}^n {{\textit{\textbf{X}}_i}{\textit{\textbf{X}}_i^T}} \frac{{\theta (\theta  + {Y_i}){e^{{\textit{\textbf{X}}_i^T}\boldsymbol{\beta} }}}}{{{(\theta  + {e^{{\textit{\textbf{X}}_i^T}\boldsymbol{\beta} }})^2}}}
$, which is semi-negative, such that ${\hat \beta _{mle}}$ makes the likelihood function take the maximum value globally.

\subsection{KKT conditions}
\label{KKT}

Let $g(\bfbeta)$ be a nonnegative convex function with $g(\textbf{0}) = \textbf{0}$, and ${\lambda _1}$ and ${\lambda _2}$ be positive turning parameters. \cite{Yu2010} considered a penalized likelihood for the convex loss function $ \ell (\boldsymbol{\beta} ) $,
$$
F(\boldsymbol{\beta} ;{\lambda _1},{\lambda _2}) = \ell_n(\boldsymbol{\beta} ) + {\lambda _1}{\left\| \boldsymbol{\beta}  \right\|_1} + {\lambda _2}g(\boldsymbol{\beta})
$$
as the generalized Lasso-type convex penalty (GLCP). The GLCP estimator for the general log-likelihood is $\boldsymbol{\hat \beta} ({\lambda _1},{\lambda _2}) = \mathop {\rm{argmin}}_{\boldsymbol{\beta}  \in \mathbb{R}^p} F(\boldsymbol{\beta} ;{\lambda _1},{\lambda _2})$.
By the sub-derivative technique in the optimization function, the corresponding Karush--Kuh--Tucker(KKT) conditions  of GLCP estimator are
\begin{eqnarray}\label{eq:kkt}
\left\{
\begin{aligned}
{\dot \ell_{n,j}}(\hat \bfbeta )+{\lambda _2}{\dot g_j}(\hat \bfbeta) = - {\lambda _1} {\rm{sign}}(\hat\beta_j) \quad\, \text{  if } \hat\beta_j\neq 0,\\
|{\dot \ell_{n,j}}(\hat \bfbeta )+{\lambda _2}{\dot g_j}(\hat \bfbeta)| \le {\lambda _1} \qquad\quad\text{  if } \hat\beta_j=0,
\end{aligned}
\right.
\end{eqnarray}
See page 68 of \cite{Buhlmann11}). Thus, in the NBR, the KKT conditions for the non-zero (or zero) elastic-net estimate is

\begin{lemma}[Necessary and Sufficient Condition]\label{lem:iff}
 Let $k \in \{ 1,2, \cdots ,p\} $ and ${\lambda _2} > 0$.  Then, a necessary and sufficient condition for elastic-net estimates of the NBR to be a solution of \eqref{eq:en} is
\begin{enumerate}
\item
${\hat \beta _k}={\hat \beta _k} \ne 0$ if
$\frac{1}{n}\sum\limits_{i = 1}^n {{x_{ik}}} \frac{{\theta ({e^{\textit{\textbf{X}}_i^T\boldsymbol{\hat \beta} }} - {Y_i})}}{{\theta  + {e^{{\textit{\textbf{X}}_i^T}\boldsymbol{\hat \beta} }}}} =[{\rm{sign}}{\hat \beta _k}]({\lambda _1} + 2{\lambda _2} | {{{\hat \beta }_k}} |).$
\item
${\hat \beta _k}= 0$ if
$\left| \frac{1}{n}{\sum\limits_{i = 1}^n {{x_{ik}}} \frac{{\theta ({e^{{\textit{\textbf{X}}_i^T} \boldsymbol{\hat \beta} }} - {Y_i})}}{{\theta  + {e^{{\textit{\textbf{X}}_i^T} \boldsymbol{\hat \beta} }}}}} \right| \le {\lambda _1}.$
\end{enumerate}
\end{lemma}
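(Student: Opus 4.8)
The plan is to obtain the lemma as the specialization of the generic GLCP Karush--Kuhn--Tucker conditions \eqref{eq:kkt} to the negative binomial elastic-net program \eqref{eq:en}, that is, to the choice $g(\boldsymbol{\beta})=\|\boldsymbol{\beta}\|_2^2$. The first step is to confirm that \eqref{eq:en} genuinely fits that framework: the average negative NB loss $\ell(\boldsymbol{\beta})$ is convex because its Hessian is $\tfrac1n$ times the negative of the negative-semidefinite second-derivative matrix computed in the Introduction, hence is positive semidefinite; $g(\boldsymbol{\beta})=\|\boldsymbol{\beta}\|_2^2$ is nonnegative, convex, and vanishes at $\mathbf{0}$; and $\lambda_1,\lambda_2>0$. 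Moreover, since $\lambda_2>0$ the objective $\ell(\boldsymbol{\beta})+\lambda_1\|\boldsymbol{\beta}\|_1+\lambda_2\|\boldsymbol{\beta}\|_2^2$ is strictly convex and coercive, so the minimizer $\boldsymbol{\hat\beta}$ exists, is unique, and is characterized exactly by \eqref{eq:kkt} holding coordinatewise --- this is golden criterion $1^{\circ}$ at work. What remains is to evaluate \eqref{eq:kkt} for this particular $\ell$ and $g$.

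Second, I would compute the two pieces appearing in \eqref{eq:kkt}. Trivially $\dot g_j(\boldsymbol{\beta})=2\beta_j$. For $\dot\ell_j$, differentiate $\ell(\boldsymbol{\beta})=-\tfrac1n\sum_{i=1}^{n}\bigl[Y_i\textit{\textbf{X}}_i^{T}\boldsymbol{\beta}-(\theta+Y_i)\log(\theta+e^{\textit{\textbf{X}}_i^{T}\boldsymbol{\beta}})\bigr]$ in $\beta_j$ and use the algebraic identity already exploited for the score in the Introduction, $Y_i-(\theta+Y_i)\tfrac{e^{\textit{\textbf{X}}_i^{T}\boldsymbol{\beta}}}{\theta+e^{\textit{\textbf{X}}_i^{T}\boldsymbol{\beta}}}=\tfrac{\theta\,(Y_i-e^{\textit{\textbf{X}}_i^{T}\boldsymbol{\beta}})}{\theta+e^{\textit{\textbf{X}}_i^{T}\boldsymbol{\beta}}}$, to get
\[
\dot\ell_j(\boldsymbol{\beta})=\frac1n\sum_{i=1}^{n}x_{ij}\,\frac{\theta\,(e^{\textit{\textbf{X}}_i^{T}\boldsymbol{\beta}}-Y_i)}{\theta+e^{\textit{\textbf{X}}_i^{T}\boldsymbol{\beta}}},
\]
which is precisely the quantity on the left-hand side of both displays of the lemma, evaluated at $\boldsymbol{\hat\beta}$.

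Third, substitute these two expressions into \eqref{eq:kkt} at $\boldsymbol{\hat\beta}$ and split into the two cases. When $\hat\beta_k\neq0$, the equality branch becomes $\dot\ell_k(\boldsymbol{\hat\beta})+2\lambda_2\hat\beta_k=-\lambda_1\,\mathrm{sign}(\hat\beta_k)$; writing $\hat\beta_k=\mathrm{sign}(\hat\beta_k)\,|\hat\beta_k|$ and collecting the sign as a common factor on the right puts this in the form displayed in item~(1). When $\hat\beta_k=0$, the inequality branch reads $|\dot\ell_k(\boldsymbol{\hat\beta})+0|\le\lambda_1$, which is item~(2). For the converse in each item, use that $\boldsymbol{\hat\beta}$, being the unique minimizer, satisfies exactly one of the two branches of \eqref{eq:kkt} at coordinate $k$ according to whether $\hat\beta_k$ vanishes; and when $\hat\beta_k\neq0$ the equality forces $|\dot\ell_k(\boldsymbol{\hat\beta})|=\lambda_1+2\lambda_2|\hat\beta_k|>\lambda_1$ (this is where $\lambda_2>0$ is essential), which is incompatible with the bound $|\dot\ell_k(\boldsymbol{\hat\beta})|\le\lambda_1$ characterizing $\hat\beta_k=0$.

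I do not expect a genuine obstacle: once \eqref{eq:kkt} is granted, the lemma is a one-line substitution followed by sign chasing. The only points deserving care are (a) the bookkeeping of signs and absolute values so that $\lambda_1$ and $2\lambda_2|\hat\beta_k|$ combine under a single $\sgn[\hat\beta_k]$, and (b) using strict convexity --- i.e.\ uniqueness of $\boldsymbol{\hat\beta}$, which is forced by $\lambda_2>0$ --- rather than merely the necessity of the KKT conditions, so as to get the two equivalences and not just the forward implications.
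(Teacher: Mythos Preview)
Your argument is correct, but it follows a different path from the paper. You treat the general GLCP conditions \eqref{eq:kkt} as a black box and then specialize them to the NB elastic-net by computing $\dot\ell_k$ and $\dot g_k$ explicitly; this is clean, and the strict-convexity observation (forced by $\lambda_2>0$) is exactly the right way to upgrade necessary first-order conditions to an if-and-only-if. The paper, by contrast, does not invoke \eqref{eq:kkt} at all: in Appendix~B it gives a bare-hands derivation, perturbing $\hat{\boldsymbol\beta}\mapsto\hat{\boldsymbol\beta}+\varepsilon\mathbf{e}_k$, Taylor-expanding the objective increment $F(\hat{\boldsymbol\beta}+\varepsilon\mathbf{e}_k)-F(\hat{\boldsymbol\beta})$ to first order in $\varepsilon$, and reading off the KKT identities from the requirement that this increment be nonnegative for every small $\varepsilon$ (positive and negative). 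The paper is explicit about why it chooses this longer route: the intermediate relation labeled \eqref{eq-smalle}, which arises in the $\hat\beta_k=0$ case of the perturbation argument, is later reused verbatim in Case~3 of the proof of the grouping-effect Lemma~\ref{lm:ge}. So your approach buys economy and directness for Lemma~\ref{lem:iff} standing alone, while the paper's elementary argument is self-contained and, more to the point, manufactures an auxiliary identity that the downstream grouping-effect bound needs.
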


\cite{Zhou2013} gave an elementary proof of KKT conditions for the elastic-net penalized optimization problem in a linear regression. Note that the KKT conditions are a standard result of sub-differentiation techniques. The prerequisite ${\lambda _2} > 0$ in Lemma~\ref{lem:iff} is indispensable. The reason is that we need ${\lambda _2} > 0$, such that $F(\hat \bfbeta  + \varepsilon {\textbf{e}_k};{\lambda _1},{\lambda _2}) - F(\hat \bfbeta ;{\lambda _1},{\lambda _2})>0$ where $\{{\textbf{e}_k}\}_{k=1}^p$ are unit coordinate vectors, see Appendix S2. Then $\hat \bfbeta$ is the unique local minimum. The KKT conditions are crucial for all sections below.

\subsection{$\ell_{q}$-estimation error using a compatibility factor}
\label{compatibility}

This section presents the sparse estimator for a high-dimensional NBR by using the fact that the elastic-net estimator is asymptotically close to the true parameter under some suitable regularity conditions.

For fixed designs $\{\textit{\textbf{X}}_i\}_{i=1}^n$, let ${\bfbeta ^*}$ be the vector of true coefficients, which satisfies
\begin{equation}\label{eq:oracle1}
{\rm{E}}{Y_i}={e^{\textit{\textbf{X}}_i^T\boldsymbol{ \beta}^* }}.
\end{equation}
In some sense, we can never really know the expectation of the negative log-likelihood, because ${\bfbeta ^*}$ is the unknown parameter in the functional estimating equation ${\textit{\textbf{X}}_i^T\boldsymbol{ \beta}^* }=\log ({\rm{E}}{Y_i})$.

In high-dimensions, we are interested in the sparse estimates defined in (\ref{eq:en}) by adding elastic-net penalty. For the true coefficient vector ${\bfbeta^{\rm{*}}} =({\beta_1^{\rm{*}}},\cdots, {\beta_p^{\rm{*}}})^T$, let $H = \{ j:{\beta_j^{\rm{*}}} \ne 0, j=1,\cdots,p\}$ and ${H^c} = \{ j:{\beta_j^{\rm{*}}}= 0, j=1,\cdots,p\} $ be the nonzero and zero components, respectively. Let $ {d_H^*} = \left| H \right|$ be the number of nonzero coefficients in ${\bfbeta ^*}$, i.e. the support of ${\bfbeta ^*}$. For any $\bm{b}\in {\mathbb{R}^p}$ and index set $H \in \{1,2,\cdots,p\}$, define the sub-vector indexed by $H$ as $\bm{b}_H = (\cdots ,{\tilde b_j}, \cdots )^T \in {\mathbb{R}^p}$, with ${\tilde b_j}=  b_j$ if $j \in H$, and $\tilde  b_j = 0$ if $j \notin H$. In the MLE theory, we know that the Kullback--Leibler (K--L) divergence measures how one probability distribution is different from another, based on a quasi-distance of two log-likelihoods. Similarly, in order to measure the derivative discrepancy between two penalized log-likelihood function w.r.t. the parameters, the \emph{symmetric Bregman (SB) divergence} between $\ell(\bfbeta_1 )$ and $\ell(\bfbeta_2 )$  is
\[{D_g^s}(\bfbeta_1 ,\bfbeta_2 ) = {(\bfbeta_1  - \bfbeta_2 )^T}[\dot \ell_n( \bfbeta_1 ) - \dot \ell_n(\bfbeta_2 )+{\lambda _2}(\dot g( \bfbeta_1) - \dot g(\bfbeta_2 ))],~\bfbeta_1 ,\bfbeta_2 \in \mathbb{R}^p.\]
If $g=0$, the symmetric Bregman divergence is ${D^s}(\hat \bfbeta ,\bfbeta ) = {(\hat \bfbeta  - \bfbeta )^T}[\dot \ell_n(\hat \bfbeta ) - \dot \ell_n(\bfbeta )]$. In this case, the symmetric Bregman divergence is a type of generalized quadratic distance (Mahalanobis distance), which can been viewed as a symmetric extension of the K--L divergence. See \cite{Nielsen09} and \cite{Huang13} for more details about SB divergence. Because $g(\bfbeta)$ is a nonnegative convex function, we have the inequality: ${D_g^s}(\bfbeta_1 ,\bfbeta_2 ) \ge {D^s}(\bfbeta_1 ,\bfbeta_2 ).$

The key to derive the oracle inequalities also depends on the behavior of the Hessian matrix of the NBR:
$\ddot \ell_n(\bfbeta )=\frac{1}{n} \sum\limits_{i = 1}^n {{\tilde { \textit{\textbf{X}}}_i}{\tilde{\textit{\textbf{X}}}_i^T}},$
where ${\tilde{\textit{\textbf{X}}}_i}:={\textit{\textbf{X}}_i}{(\frac{{\theta (\theta  + {Y_i}){e^{{\textit{\textbf{X}}_i^T}\boldsymbol{\beta} }}}}{{{{(\theta  + {e^{{\textit{\textbf{X}}_i^T}\boldsymbol{\beta} }})}^2}}})^{1/2}}$ is the \emph{curvature-scaled design}.

In the fixed design linear model ${\rm{E}}{\bm Y} = {\mathbf X}\bm\beta^*$ with ${\rm{Var}}{\bm Y}=\mathbf{I}_p \sigma^2$, it can be shown that, with probability greater than $1-{\delta _n}$,
\begin{align}\label{olsl2}
\|\hat{\bm\beta}_{LS}-{\bm\beta}^*\|_2 \le \sigma \sqrt {\frac{p}{n}}\cdot  [{\delta _n}{\lambda _{\min }}( {{\textstyle{1 \over n}}{{\bf{X}}^T}{\bf{X}}})]^{-1/2}.
\end{align}
for the ordinary least square (OLS) estimator $\hat {\boldsymbol{\beta}}_{LS}=(\mathbf{X}^T\mathbf{X})^{-1}\mathbf{X}^T\textit{\textbf{Y}}$, see Section 8.1 of \cite{Zhang20}. In an increasing dimension $p=p(n)$, it is well--known that the \emph{Gram matrix} is $\frac{1}{n} \sum\limits_{i = 1}^n {{\textit{\textbf{X}}_i}{\textit{\textbf{X}}_i^T}}$ (i.e., the correlation matrix between the covariates), which is singular when $p>n$. The positivity assumption of the ${\lambda _{\min }}( {{\textstyle{1 \over n}}{{\bf{X}}^T}{\bf{X}}})$ is crucial to obtain optimal convergence under $p<\infty$. In the sparse high-dimensional linear model via Lasso, to obtain the oracle inequality with the fast and optimal rate as discussed in \cite{Bickel09}, the following versions of the restricted minimal eigenvalue is usually needed under sparse cone set \eqref{eq:re}.

Let the \emph{sparse cone set} be
\begin{equation}\label{eq:re}
{\mathop{\rm S}\nolimits} (s ,H) := \{ \bm b \in {\mathbb{R}^p}:{|| {{{\bm b}_{{H^c}}}} ||_1} \le s {|| {{{\bm b}_H}} ||_1}\},~(s \in \mathbb{R}^+).
\end{equation}
The \emph{compatibility factor} (denoted by $C(s ,H,\mathbf\Sigma )$; see \cite{Geer2007}) of a $p \times p$ nonnegative-definite matrix $\mathbf \Sigma$ is defined by
\begin{equation}\label{eq:comp}
C^2(s ,H,\mathbf\Sigma ) := \mathop {\inf }\limits_{0 \ne {\bm b} \in {\rm{S}}(s ,H)} \frac{{{d_H^*}{{({{\bm b}^T}\mathbf\Sigma {\bm b})}}}}{{{{\|{{{\bm b}_H}} \|_1^2}}}} > 0,~~(s  \in \mathbb{R}).
\end{equation}

To derive the $\ell_{q}$-loss ($q>1$) oracle inequalities for the target coefficient vectors, we require the concept of \emph{weak cone invertibility factors} (weak CIF; see (53) of \cite{Ye10}),
\begin{equation}\label{eq:wk}
{C_q}(s ,H,\mathbf\Sigma ): = \mathop {\inf }\limits_{0 \ne {\bm{b}} \in {\rm{S}}(s ,H)} \frac{{{{{d_H^*}}^{1/q}}( {{\bm{b}}^T}\mathbf\Sigma {\bm{b})}}}{{{{||{{{\bm{b}}_H}} ||_1}}\cdot{{|| {\bm{b}} ||}_q}}} > 0,(s \in \mathbb{R}).
\end{equation}
This constant generalizes the compatibility factor, and is close to the restricted eigenvalue; see \cite{Bickel09}.

From the results in \cite{Ye10} and \cite{Huang13}, we know that the positivity assumptions of compatibility factor and the weak CIF can achieve sharper upper bounds for the oracle inequalities because both are bigger than the restricted eigenvalue:
\begin{equation*}
{\rm{Re}}(s ,H,\mathbf\Sigma ) := \mathop {\inf }\limits_{0 \ne {\bm b} \in {\rm{S}}(s ,H)} \frac{{{{{{\bm b}^T}\mathbf\Sigma {\bm b}}}}}{{{{\|{{{\bm b}}} \|_2^2}}}} \le \mathop {\inf }\limits_{0 \ne {\bm b} \in {\rm{S}}(s ,H)} \frac{{{d_H^*}{{({{\bm b}^T}\mathbf\Sigma {\bm b})}}}}{{{{\|{{{\bm b}_H}} \|_1^2}}}}=C^2(s ,H,\mathbf\Sigma ),~(s  \in \mathbb{R}),
\end{equation*}
due to $\|{{\bm{b}}_H}\|_{1} \leq {{d_H^*}}^{1 / 2}\|{{\bm b}}\|_{2}$.

Using the definitions of SB divergence with $\bfbeta_1=\hat \bfbeta  ,\bfbeta_2={\bfbeta ^*}$, let ${z^*} := {\| {\dot \ell_n ({\bfbeta ^*}) + {\lambda _2}\dot g({\bfbeta ^*})} \|_\infty }$ and $\Delta := \hat \bfbeta  - {\bfbeta ^*}$. We now provide the lower and upper bounds for the symmetric Bregman divergence.
\begin{lemma}[Theorem 1 in \cite{Yu2010}]\label{lem:ulb}
For the GLCP estimation, we have
\begin{align}\label{eq-msb>sb}
({\lambda _1} - {z^*})&{|| {{{\Delta }_{{H^c}}}} ||_1} \le {D_g^s}(\hat \bfbeta ,{\bfbeta ^*}) + ({\lambda _1} - {z^*}){|| {{{\Delta }_{{H^c}}}}||_1} \le ({\lambda _1} + {z^*}){|| {{{\Delta }_H}} ||_1}.
\end{align}
\end{lemma}
If $  {z^*} \le \frac{{\zeta  - 1}}{{\zeta  + 1}}{\lambda _1} $, for some $\zeta  > 1$, the inequality \eqref{eq-msb>sb} imply
\begin{equation} \label{eq:ulb2}
\frac{{2{\lambda _1}}}{{\zeta  + 1}}|| {{{\Delta }_{{H^c}}}} ||_1 \le D_{\rm{g}}^s(\hat \bfbeta ,{\bfbeta ^*}) + \frac{{2{\lambda _1}}}{{\zeta  + 1}}|| {{{\Delta }_{{H^c}}}} ||_1 \le \frac{{2\zeta {\lambda _1}}}{{\zeta  + 1}}|| {{{\Delta }_H}} ||_1,
\end{equation}
from ${\lambda _1} - {z^*} \ge \frac{{2{\lambda _1}}}{{\zeta  + 1}}$ and ${\lambda _1} + {z^*} \le \frac{{2\zeta {\lambda _1}}}{{\zeta  + 1}}$. By (\ref{eq:ulb2}), we have
\begin{equation} \label{eq-con}
{|| {{\Delta}_{{H^c}}} ||_1} \le \zeta{\| {{{\Delta }_H}} \|_1}.
\end{equation}
Hence we conclude that in the event
\begin{center}
${\cal K}_\lambda: = \left\{ {{z^*}= {\| {\dot \ell_n ({\bfbeta ^*}) + {\lambda _2}\dot g({\bfbeta ^*})} \|_\infty } \le \frac{{\zeta  - 1}}{{\zeta  + 1}}{\lambda _1}} \right\},$
\end{center}
the error of estimate $\Delta= \hat \bfbeta  - {\bfbeta ^*} \in {\mathop{\rm S}\nolimits} (\zeta ,H)$. Then assumptions $C^2(s ,H,\mathbf\Sigma )>0$ and ${C_q}(s ,H,\mathbf\Sigma )>0$ for the Hessian matrix $\mathbf\Sigma=\ddot \ell_n(\bfbeta^*)$ are indispensable assumptions for deriving the targeted oracle inequalities from the optimization \eqref{eq:en} and the expected version \eqref{eq:oracle1}. Some additional regularity conditions are required.
\begin{itemize}
\item [\textbullet] (C.1): Assume bounded covariates,
\begin{center}
$\max \{ \left| {x_{ij}} \right|; 1 \le i \le n,1 \le j \le p\}=L  < \infty $.
\end{center}

\item [\textbullet](C.2):  Based on the covairates $\{\textit{\textbf{X}}_i\}_{i=1}^n$, we assume identifiability condition that $\boldsymbol{\beta}\in \mathbb{R}^p$ satisfies
    \begin{center}
        ${{\textit{\textbf{X}}_i^T}(\boldsymbol{\beta} + \bfdelta)}={{\textit{\textbf{X}}_i^T} \boldsymbol{\beta}}$ implies ${{\textit{\textbf{X}}_i^T}\boldsymbol{\bfdelta}} = 0$ for $\bfdelta \in {\mathbb{R}^p}$.
    \end{center}

\item [\textbullet](C.3): Suppose that $||\bfbeta^* ||_1 \le B$.
\end{itemize}
The bounded covariates in C.1 are a common assumption in GLMs (see Example 5.40 of \cite{Vaart1998}); it may be achieved by performing a bounded and monotone transformation of the covariates in the real data. The identifiability condition C.2 and the compact parameter space C.3 are common assumptions for obtaining the consistency for a general M-estimation; see Section 5.5 and the remark after Theorem 5.9 in \cite{Vaart1998}. Recently, \cite{WeiBbach2019} showed the consistency of the NBR with fixed covariates, under the assumption that all possible parameters and the regressor are in some compact spaces.

First, we present the nonasymptotic upper bounds for the elastic-net regularized NBR in the following two theorems.
\begin{theorem}\label{theo-oi}
Let  $C(\zeta ,H,\ddot \ell_n({\bfbeta ^*}))$ and $C_q(\zeta ,H,\ddot \ell_n({\bfbeta ^*}))$ be the compatibility factor and the weak cone invertibility factor, respectively, defined above. Define $\tau  :=\frac{{L(\zeta + 1){d^*}{\lambda _1}}}{{2{{\left[ {C(\zeta ,H)} \right]}^2}}}  \leq \frac{1}{2}e^{-1}$. Assume that  (C.1), (C.2), and the event ${\cal K}_\lambda$ hold. Then, we have
\begin{equation}\label{eq:ci-nbd}
\begin{aligned}
 \| {\hat \bfbeta  - {\bfbeta ^{\rm{*}}}} \|_1 \le \frac{{{e^{{2a_\tau }}}(\zeta  + 1){d_H^*}{\lambda _1}}}{{2{ {C^2(\zeta ,H,\ddot \ell_n({\bfbeta ^*}))} }}} ~~\text{and}~~\|\hat \bfbeta  - {\bfbeta ^*}\|_{q} \le \frac{{2{e^{2{a_\tau }}}\zeta {{d_H^*}^{1/q}}{\lambda _1}}}{{\left( {\zeta  + 1} \right){C}_q(\zeta ,H,\ddot \ell_n({\bfbeta ^*}))}},
\end{aligned}
\end{equation}
where ${a_\tau } \le \frac{1}{2}$ is the smaller solution of the equation $a {e^{ - 2a }} = \tau$.
\end{theorem}

On the one hand, the Theorem~\ref{theo-oi} contains basic oracle inequalities conditioning on the random event, which needs further refinements. What remains to be done is to focus the probability upper bound of event ${\cal K}_\lambda$. With assumption (C.3), we have ${z^*} \le {\| {\dot \ell_n ({\bfbeta ^*})} \|_\infty } + 2{\lambda _2}B$. Our aim of proof is to have
\begin{equation} \label{eq:rela}
P({\cal K}_\lambda^c) \le P({\| {\dot \ell_n ({\bfbeta ^*})} \|_\infty } \ge \frac{{\zeta  - 1}}{{\zeta  + 1}}{\lambda _1}- 2{\lambda _2}B ) \to 0~~\text{as}~ n,p \to \infty,
\end{equation}
provided that ${\lambda _2}$ is sufficient small.

To bound ${\| {\dot \ell_n ({\bfbeta ^*})} \|_\infty }$, all we need is to apply some concentration inequalities in terms of the NB empirical processes \eqref{eq:NBEP}, that is the sum of independent weighted centralized NB random variables. Because the dispersion parameter $\theta$ is known, the NB random variables $\{ {Y_i}\} _{i = 1}^n$ belong to the exponential family
\begin{equation} \label{eq:E-F}
f({y_i};{\eta _i}) \propto  \exp \{ {y_i}{\eta _i} - \psi ({\eta _i})\}~\text{with}~{\eta _i}: = \bm X_i^T{\bfbeta ^*} + \log (\theta  + {e^{{\bm X_i^T}{\boldsymbol{\beta} ^*}}}) \in \Theta ,
\end{equation}
where $\Theta$ is the compact parameter space. Thus, under fixed design, the sub-Gaussian concentration inequalities for the non-random weighted sum of exponential family random variables with compact parameter space is applicable; see Lemma 6.1 in \cite{Rigollet12} or Proposition 3.2 in \cite{Zhang20} with more discussion.

On the other hand, the Compatibility Factor and weak CIR we employ in this section are random constants. They contains the Hessian matrix of the true coefficient vector, and thus encapsulate the random quantities $\{ {Y_i}\} _{i = 1}^n$. Note that deriving the lower bound for these random quantities decreaseS the probability that oracle inequalities are true, but the loss is negligible in the next theorem. Next, we successfully show using the NB concentration inequality that a reasonable non-random lower bounds of the compatibility factor (or the weak CIR) makes sure that the upper bounds are constants with high probability. Thus the rigorous convergence rate of $\hat \bfbeta$ is well established. Note that \cite{Yu2019} directly assume that the inverse of compatibility factor of $\ddot{\ell}(\bfbeta^{*})$ for the Cox model is $O_p(1)$, which they call it ``a high-level condition''. The Hessian matrix of the Cox model is also a random element.

Two events for truncating the compatibility factor and the weak CIR, is defined by
$${\cal E}_c:=\{{C^2(\zeta ,H,\ddot \ell_n({\bfbeta ^*}))}>{C_t^2(\zeta ,H)}\}~\text{and}~{\cal E}_w:=\{{C_{q}}(\zeta ,H,\ddot \ell_n({\bfbeta ^*}))>{C_{qu}}(\zeta ,H)\},$$
where ${C_t^2(\zeta ,H)}$ and ${C_{qu}}(\zeta ,H)$ are nonrandom constants defined in the proof for certain constants $t, u>0$.
\begin{theorem}\label{thm:event}
Under the assumptions of Theorem~\ref{theo-oi}, we further assume (C.2). Let $B_1$ be the constant satisfying $C_{\xi ,B_1} := \frac{{\zeta  - 1}}{{\zeta  + 1}} - 2{B_1} > 0$. Let ${\lambda _1} = \frac{{C_{LB}^{}L}}{C_{\xi ,{B_1}}}\sqrt {\frac{{2r\log p}}{n}} $, where ${C_{LB}^2}:={e^{LB}} + \frac{{{e^{2LB}}}}{\theta }$ is a variance-depending constant and $r>1$ is a constant. Put ${\lambda _2}= B_1{\lambda _1}/B$. Under the event ${\cal K} \cap {\cal E}_c$ (or ${\cal K} \cap {\cal E}_w$), we have:
\begin{align}\label{eq:ci-nbd1}
 &P\left( \| {\hat \bfbeta  - {\bfbeta ^{\rm{*}}}} \|_1 \le \frac{{{e^{{2a_\tau }}}(\zeta  + 1){d_H^*}{\lambda _1}}}{{2{ {C_t^2(\zeta ,H)} }}}\right)\ge 1 - \frac{2}{{{p^{r - 1}}}} - 2{p^2}e^{  - \frac{{n{t^2}}}{{2{{[d_H^*C_{LB}(1 + \varsigma )L^2]}^2}}}}\\
 &\text{or}~P\left( \|\hat \bfbeta  - {\bfbeta ^*}\|_{q} \le \frac{{2{e^{2{a_\tau }}}\zeta {d_H^*}^{1/q}{\lambda _1}}}{{\left( {\zeta  + 1} \right){C}_{qu}(\zeta ,H)}}\right)\ge 1 - \frac{2}{{{p^{r - 1}}}} - 2{p^2}e^{  - \frac{{n{u^2}}}{{2{{[d_H^*C_{LB}(1 + \varsigma )L^2]}^2}}}}.
\end{align}
\end{theorem}

If we presume the condition  ${d_H^*} = O(1)$ in Theorem \ref{thm:event}, which implies that the error bound is of  order $\sqrt {\frac{{\log p}}{n}} $, the elastic-net estimates have $\ell_{1}$-consistency property when the dimension of covariates increases with order $e^{o(n)}$. The MLE has the convergence rate $\frac{1}{{\sqrt n }}$. Nevertheless, in high-dimensional condition, we have to magnify $\sqrt {\log p} $ to the convergence rate of MLE. If we assume ${d_H^*} = o(\sqrt {\frac{n}{{\log p}}} )$, that is $p = e^{ o(n/{d_H^*})} $, then ${d_H^*}\lambda  = o(1)$ which implies the consistency property. If we consider random designs, the story is different. Our purpose in next section is to present an approach that avoids the random upper bound for the $\ell_{1}$ or $\ell_{2}$ estimation error, and provides the oracle inequality for the squared prediction error.

\subsection{The prediction error under a random design}
\label{Stabil}
In this section, we focus on the prediction error. We assume that the $n \times p$ design matrix $\textbf{X} = {({\emph{\textbf{X}}_1}, \cdots ,{\emph{\textbf{X}}_n})^T}$ is random. In our applications, the test data set is a new design $\textbf{X}^*$, which is an independent copy of $\textbf{X}$. Thus it requires the randomness assumption of the design matrix. We aim to predict the response $Y_{n+1}$ using the new random covariates $\textit{\textbf{X}}_{n+1}$ by resorting to elastic-net estimator $\hat \bfbeta $ to estimate the unknown $Y_{n+1}$.

Here $\emph{\textbf{Y}}\in \mathbb{R}^{n}$ contains $n$ independent ({\rm{ind.}}) responses $\{ {Y_i}\} _{i = 1}^n$. Thus the covariates and responses are considered pairs of random vectors $(\textbf{X} ,\emph{\textbf{Y}})$. When $\{\textit{\textbf{X}}_i\}_{i=1}^n$ is degenerately distributed, it reduces to a fixed design, and hence the result here also holds for a fixed design. Through this paper, we denote the element in the design matrix $\{x_{ij}\}$ as fixed design, and $\{X_{ij}\}$ as random design. The conditional distribution of a single observation $Y_i\vert \textit{\textbf{X}}_i = \textit{\textbf{x}}_i$ is assumed to be conditional NB distributed with ${\rm{E}}(Y_i|\textit{\textbf{X}}_i = \textit{\textbf{x}}_i) =e^{\textit{\textbf{x}}_i^T\boldsymbol{\beta}}$.

Let ${\bfbeta ^*}$ be the vector of true coefficients, which is defined by the minimizer
\begin{equation}\label{eq:oracle2}
{\bfbeta^{\rm{*}}}{=}\mathop {\rm{argmin} }\limits_{\boldsymbol{\beta}\in {{\mathbb{R}}^p}} {\rm{E}}l(Y,{\bm{X}, {\bfbeta }}),
\end{equation}
where $l(Y,\bm{X}, \bfbeta) = Y\bm{X}^T\bfbeta  - (\theta  + Y)\log (\theta  + {e^{\bm{X}^T\boldsymbol{\beta} }})$ is the NB loss.

 To derive nonasymptotical bounds for the $\ell_{1}$ estimation and square prediction error, we focus on the empirical process for any possible $\bfbeta$ [on the NB loss function in \eqref{eq:oracle2} with random $\textbf{X}$],
$$\mathbb{P}_n l(\textit{\textbf{X}} ,Y, {{\bfbeta }}): =  - \frac{1}{n}\sum\limits_{i = 1}^n {[{Y_i}{{{\textit{\textbf{X}}_i^T}\boldsymbol{\beta} }}  - (\theta  + {Y_i})\log (\theta  + {e^{{\textit{\textbf{X}}_i^T}\boldsymbol{\beta} }})]},$$
where $\mathbb{P}_n$ is the empirical measure of the samples $\{(\textit{\textbf{X}}_i, Y_i)\}_{i=1}^{n}  \stackrel{{\rm{ind.}}}{\sim} (\textit{\textbf{X}} ,Y)$.

The concentration and fluctuation of the empirical process are crucial to evaluating the consistent properties of the estimates. The proof oracle inequalities in this section consists 3 steps, including: 1. Checking $\bm{\hat \beta}  - {\bm\beta ^*}$ be in cone set by using definition of penalized estimation and KKT-like conditions; 2. Verifying the high probability of KKT-like conditions; 3. Deriving the oracle inequalities from restricted eigenvalue condition with some elementary inequalities. For simplicity, we use symbols for the empirical process in this section. We need some assumptions, such that $\hat \bfbeta $ is consistent.
 \begin{itemize}
\item [\textbullet] (H.1): All variables $\textit{\textbf{X}}_i$ are bounded: there exists a constant $L>0$, such that
    $|||{\bf X}|||_\infty:=\mathop {\sup }\limits_{1 \le i \le \infty} {\| \textit{\textbf{X}}_i \|_\infty } \le L$ a.s.

\item [\textbullet](H.2): Assume that $||\bfbeta^* ||_1 \le B$.

\item [\textbullet](H.3): There exists a large constant ${M_0}$, such that $\hat\bfbeta$ is in the $\ell_1$ ball:
\begin{center}
$\hat\bfbeta \in {{\cal S}_{M_0}}(\bfbeta ^*):= \left\{ {\bfbeta  \in {\mathbb{R}^p}:{\|{\bfbeta } - \bfbeta ^*\|_1}  \le {M_0}} \right\}$.
\end{center}
\item [\textbullet](H.4):  Let ${\theta }>1$. The negative log-density of $n$ independent NB responses $\psi(\bm y) :=-\log p_{\bm Y}(\bm y)$, for $\bm Y = {({Y_1}, \cdots ,{Y_n})^T}$, satisfies the \emph{strongly midpoint log-convex} properties for some $\gamma>0$,
\begin{equation}\label{eq:SDCM}
 \psi(\bm x)+\psi(\bm y)-\psi(\lceil \frac{1}{2} \bm x+\frac{1}{2} \bm y\rceil)-\psi(\lfloor\frac{1}{2} \bm x+\frac{1}{2} \bm y\rfloor) \ge \frac{\gamma}{4}\|\bm x-\bm y\|_{2}^{2}\quad \forall \bm x, \bm y \in \mathbb{Z}^{n}.
\end{equation}
\end{itemize}
\begin{remark}
(H.1) and (H.2) are mentioned in \cite{Blazere14}, and (H.3) is a high technique condition owing to the noncanonical link GLMs. The constraint in the optimization \label{eq:en} is equivalent to ${\alpha }\left\| \boldsymbol{\beta}   \right\|_1 + (1-\alpha){\left\| \boldsymbol{\beta}  \right\|_2^2}\le t$, with unknown $\alpha  \in [0,1]$ and $t \in \mathbb{R}$ leading to $\| \hat\bfbeta\|_1 \le {M_0}$ if we suppose that $t/\alpha  \le {M_0}$. There is a constant $K>0$, such that $\mathop {\max }\limits_{1 \le i \le n} \left| {\textit{\textbf{X}}_i^T\bfbeta^*} \right| \le K$ a.s., for all $n$. A convex function $F$ is called strongly convex if the Hessian matrix of $F$ has a (uniformly)
lower bounded eigenvalue. While examining exponential families in high dimensions, \cite{Kakade10} assumed that continuous exponential families \eqref{eq:E-F} have strongly convex log-likelihood function with ${\eta _i}$ in a sufficiently small neighborhood. For a fixed dimensional MLE, \cite{Balabdaoui13} show that the discrete log-concave maximum likelihood estimator is strongly consistent under some settings. Our assumption (H.4) is a condition that ensures that the suprema of the multiplier empirical processes of $n$ independent responses have sub-Gaussian concentration phenomena in (S1.19), which can be alternatively be checked by the tail inequality for suprema of empirical processes corresponding to classes of unbounded functions \citep{Adamczak2008}.  For the case of a fixed design in Section \ref{compatibility}, we do not require (H.4) in order to derive the oracle inequalities.
\end{remark}

In this section, we give sharp bounds for $\ell_{1}$ estimation and squared prediction errors for NBR models by looking for a weaker condition that is analogous to the restricted eigenvalue (RE) condition proposed by \cite{Bickel09}, and the weak CIF and compatibility factor conditions presented in Section 3.2. Here, we borrow a condition from the Stabil Condition introduced by \cite{Bunea08} for $\ell_{1}$ and $\ell_{1}+\ell_{2}$ penalized logistic regressions.

For $c$, $\varepsilon>0$, we define the \emph{fluctuated cone set} for some bias vector $ \bm{b}$ as
\begin{equation}\label{eq:re1}
{\mathop{\rm V}\nolimits} (c ,\varepsilon, H) := \{ \bm{b} \in {\mathbb{R}^p}:{||{{\bm{b}_{{H^c}}}} ||_1} \le c{|| {{\bm{b}_H}}||_1  } + \varepsilon \},
\end{equation}
which is a fluctuated (or measurement error) version of the cone set ${\mathop{\rm S}\nolimits} (s ,H) := \{ \bm b \in {\mathbb{R}^p}:{|| {{{\bm b}_{{H^c}}}} ||_1} \le s {|| {{{\bm b}_H}} ||_1}\}$ mentioned in \eqref{eq:re}.

We substitute $\bm{b}=\hat\bfbeta- {\bfbeta ^*}$ into the proof. For real data, let $\hat\bfbeta$ be the estimator based on the true covariates, and let $\hat\bfbeta_{me}$ be the estimator from covariates with a measurement error. Note that under the cone condition ${||{{\bm{b}_{{H^c}}}} ||_1} \le c{|| {{\bm{b}_H}}||_1  }$, for $\bm{b}=\hat\bfbeta- {\bfbeta ^*}$, we get
\begin{align*}
|| {{(\hat\bfbeta_{me}- {\bfbeta ^*})_{{H^c}}}}||_1 -||(\hat\bfbeta  -\hat\bfbeta_{me})_{{H^c}}||_1&\le {||{{(\hat \bfbeta  - {\bfbeta ^*})_{{H^c}}}} ||_1}\le c{||{{(\hat \bfbeta  - {\bfbeta ^*})_{{H}}}} ||_1}\\
 &\le c|| {{(\hat\bfbeta_{me}- {\bfbeta ^*})_H}}||_1+c||(\hat\bfbeta  -\hat\bfbeta_{me})_H||_1.
\end{align*}
Then,
\begin{align*}
{||{{\bm{b}^{me}_{{H^c}}}} ||_1} \le c{|| {{\bm{b}^{me}_H}}||_1  } + \varepsilon~~\text{for}~\bm{b}^{me}:=\hat\bfbeta_{me}- {\bfbeta ^*},
\end{align*}
where $\varepsilon=c|| {{(\hat\bfbeta_{me}- {\bfbeta ^*})_H}}||_1+||(\hat\bfbeta  -\hat\bfbeta_{me})_{{H^c}}||_1$. This argument indicates that the fluctuated cone set quantifies the level of the measurement error if $\hat\bfbeta_{me}$ is misspecified as $\hat\bfbeta$.

On the fluctuated cone set, we assume that the $p \times p$ the expected empirical covariance matrix $\boldsymbol{\Sigma}={\rm{E}}\bm{X}\bm{X}^T$ fulfills the Stabil Condition as below. The Stabil Condition for matrix $\boldsymbol{\Sigma}$ avoids the random Hessian matrix in the Compatibility Factor Condition and the weak CIF Condition. However, there is no free lunch. The proposed oracle inequalities in this section require (H.4), which serves for the tail inequality for the suprema of NB empirical processes.

\begin{definition}\label{eq:def1}
(Stabil Condition with measurement error) For given $c,\varepsilon >0$, the matrix $\boldsymbol{\Sigma}$ satisfies the Stabil Condition $S(c,\varepsilon,k)$ if there exists $0<k<1$, such that
\begin{center}
${\textbf{b}^T}\boldsymbol{\Sigma}\textbf{b} \ge k||{\textbf{b}_H}||_2^2 - \varepsilon$
\end{center}
for any $\textbf{b} \in V(c,\varepsilon,H)$. Here, the restriction  $0<k<1$ can be attained by scaling $\textit{\textbf{X}}$.
\end{definition}

Let ${l_1}(\bfbeta ): = {l_1}(\bfbeta ,\bm{X},Y) := - Y[{\bm{X}^T}\bfbeta  - {\rm{log}}(\theta + {\rm{ exp}}\{  {\rm{ }}{\textit{\textbf{X}}^T}\bfbeta \} )]$, which is a linear function of the response, and let ${l_2}(\bfbeta): = {l_2}(\bfbeta,\textit{\textbf{X}}) := \theta {\rm{log}}(\theta {\rm{ + exp}}\{{\textit{\textbf{X}}^T}\bfbeta \} )$, which is free of the response. The NB loss function $l(\bfbeta ,\bm{X},Y) = {l_1}(\bfbeta ,\bm{X},Y) + {l_2}(\bfbeta ,\bm{X})$ is thus decomposed into two parts. Let $\mathbb{P}l({{\bfbeta }}) := {\rm E}l(\bfbeta,\bm{X},Y)$ be the expected risk function, where the expectation is under the randomness of $(\textit{\textbf{X}} ,Y)$. We prefer the centralized empirical loss $\left( \mathbb{P}_{n}-\mathbb{P}\right)  l(\bfbeta)$, which represents the fluctuation between the expected and the sample loss, rather than the loss itself. We break down the empirical process into two parts:
\begin{equation}\label{eq:EP}
\left( \mathbb{P}_{n}-\mathbb{P}\right)  l(\bfbeta)=\left( \mathbb{P}_{n}-\mathbb{P}\right) l_{1}(\bfbeta)+\left( \mathbb{P}_{n}-\mathbb{P}\right)  l_{2}(\bfbeta).
\end{equation}

In the following, we give upper bounds for the first and second parts of the empirical process: $(\mathbb{P}_{n}-\mathbb{P})( l_{m}(\bfbeta^{*})-l_{m}(\hat{\bfbeta}))$, for $m=1,2$. We show that  $(\mathbb{P}_{n}-\mathbb{P})( l_{m}(\bfbeta^{*})-l_{m}(\hat{\bfbeta}))$ has \emph{stochastic Lipschitz properties} (see \cite{Chi10}) with respect to ${\|{{\hat \bfbeta }} - \bfbeta ^*\|_1}$. Let the $\ell_1$ ball be ${{\cal S}_{M_0}}(\bfbeta ^*):= \left\{ {\bfbeta  \in {\mathbb{R}^p}:{\|{\bfbeta } - \bfbeta ^*\|_1}  \le {M_0}} \right\}$, which is referred as the \textit{local set}. Then,

\begin{proposition}\label{prop:upbound1}
Let the centered responses be $\{Y_i^c:={Y_i} - {\rm{E}}{Y_i}\}_{i=1}^n$ and, (H.1)--(H.4) are satisfied. If
$\lambda _{1}\ge 4L(2{\tilde C_{LB}} + A\sqrt {2\gamma } )\sqrt {\frac{{2\log 2p}}{n}}$,($A\ge 1, {\tilde C_{LB}^2}:={e^{LB}} + \frac{{(1+\theta){e^{2LB}}}}{\theta }$),  define the event $\mathcal{A}$ for the suprema of the multiplier empirical processes as
$$\mathcal{A}:=\left\{ \mathop {\sup }\limits_{\boldsymbol{\beta}_1,\boldsymbol{\beta}_2 \in {{\cal S}_{M_0}}(\boldsymbol{\beta}^*)} \left| \frac{{ 1}}{n}\sum\limits_{i = 1}^n \frac{Y_i^c{\theta \textit{\textbf{X}}_i^T(\boldsymbol{\beta}_1-{\bfbeta ^*} )}}{{(\theta {\rm{ + exp}}\{ \textit{\textbf{X}}_i^T\boldsymbol{\beta}_2 \})\|\bfbeta_1  - {\bfbeta ^*}\|_1 }} \right| \le \frac{\lambda _{1}}{4} \right\}.$$
Then, we have $P(\mathcal{A}) \ge 1- (2p)^{-A^2}$. Moreover,
$$P\left\{(\mathbb{P}_{n}-\mathbb{P})( l_{1}(\bfbeta^{*})-l_{1}(\hat{\bfbeta}))\le \frac{{{\lambda _{1}}}}{4} {\|{{\hat \bfbeta }} - \bfbeta ^*\|_1}\right\}\ge 1- (2p)^{-A^2}.$$
\end{proposition}

This proposition indicates that the discrepancy between the first part of the empirical process and its expectation is bounded from above by the tuning parameter multiplied by the $\ell_{2}$ norm of the difference between the estimated vector and the target vector. The $\frac{{{\lambda _{1}}}}{4}$ can be seen as a Lipschitz constant of the first part of the centralized empirical process.

Similarly to $\mathcal{A}$ as a KKT-like condition, we provide a crucial lemma to bound the second part of the empirical process with responses. Let ${\nu _n}(\boldsymbol{\beta} ,{\boldsymbol{\beta} ^*}): = \frac{{({\mathbb{P}_n} - \mathbb{P})\left( {{l_2}({\boldsymbol{\beta} ^*}) - {l_2}(\boldsymbol{\beta} )} \right)}}{{ {\|{\boldsymbol{\beta}} - \boldsymbol{\beta} ^*\|_1}  + {\varepsilon _n}}}$ be the normalized second part of the empirical process, which is a random variable indexed by $\boldsymbol{\beta}$. Then we define the \emph{local stochastic Lipschitz constant} for a certain $M>0$,
\begin{center}
${Z_M}(\bfbeta^*):=\mathop {{\rm{sup}}}\limits_{\boldsymbol{\beta}\in {{\cal S}_M}(\boldsymbol{\beta}^*)} |{\nu _n}(\bfbeta ,{\bfbeta ^*})|,~\text{and a random event}~{\cal B} := \{ {Z_M}(\bfbeta^*)\le \frac{{{{\lambda _1}}}}{4}\},$
\end{center}
 where we bound the local stochastic Lipschitz constant using the rescaled tuning parameter $\frac{{{{\lambda _1}}}}{4}$. Moreover, by definition, we have
\begin{center}
$| {{\nu _n}( \hat  \bfbeta ,{\bfbeta ^*})} | \le \mathop {\sup }_{{{\cal S}_M}(\boldsymbol{\beta}^*)} | {{\nu _n}( \hat \bfbeta ,{\bfbeta ^*})} | \le \frac{{{\lambda_1}}}{4}$,
\end{center}
which gives following bound,
\begin{equation}\label{prop:upbound2}
| {({\mathbb{P}_n} -\mathbb{P})({l_2}({\hat \bfbeta }) - {l_2}(\bfbeta^* ))} | \le \frac{{{\lambda_1}}}{4}{\rm{(}} {\|{ \hat \bfbeta } -\bfbeta ^*\|_1}  + {\varepsilon _n}{\rm{)}}~~\text{on}~\mathcal{B},
\end{equation}
provided that $\hat\bfbeta \in {{\cal S}_{M}}(\bfbeta ^*)$.

According to the following lemma, in the event $\mathcal{A}\bigcap\mathcal{B}$, the estimator $\hat{\bfbeta}$ lies in a known neighborhood of the true coefficient vector $\bfbeta^{*}$.

\begin{lemma}\label{local}
Under (H.2), let $8B{\lambda _2}+4M=\lambda _1$,  we have
$${\|{\hat \bfbeta } - \bfbeta ^*\|_1}  \le 16||\bfbeta^* || + 2{\varepsilon _n}~~\text{on}~\mathcal{A}\bigcap\mathcal{B}.$$
\end{lemma}

The proof of Lemma~\ref{local} relies on the optimization \eqref{eq:en} and  the definition of the minimizer $\bfbeta ^*$ from the expected loss \eqref{eq:oracle2}. By Lemma~\ref{local}, on the event $\mathcal{A}\bigcap\mathcal{B}$, we immediately get $\hat \bfbeta  \in {{\cal S}_{ 16B + 2{\varepsilon _n}}}(\bfbeta ^*)$. Note that we assume that $\hat\bfbeta \in {{\cal S}_{M_0}}(\bfbeta ^*)$, for some finite $M_0>M={ 16B + 2{\varepsilon _n}}$ in (H.3). That is Lemma~\ref{local} sharpens $\hat\bfbeta$ in the $\ell_1$-ball ${{\cal S}_{M}}(\bfbeta ^*)$, whereas $\hat\bfbeta$ is originally assumed in the $\ell_1$ ball ${{\cal S}_{M_0}}(\bfbeta ^*)$. Therefore, the following probability analysis of the event $\mathcal{A}\bigcap\mathcal{B}$ is indispensable. The event $\mathcal{A}\cap \mathcal{B}$ associated with the empirical loss functions plays an important role in deriving the oracle inequalities for general loss functions, because we could bound the $\ell_1$ estimation error conditioning on event  $\mathcal{A}\bigcap\mathcal{B}$.  We now give the result that the event $\mathcal{A}\cap \mathcal{B}$ occurs with a high probability.

\begin{proposition}\label{prop-highprobability}
Let $M = 16B + 2{\varepsilon _n}$. Suppose $\hat\bfbeta \in {{\cal S}_{M_0}}(\bfbeta ^*)$, for $\infty>M_0>M$, and that (H.1)-(H.4) hold. If
\begin{equation}\label{eq:lambda}
{\lambda _1}\ge \mathrm{max}\left(\frac{{20\theta AML}}{{M + {\varepsilon _n}}}\sqrt {\frac{{2\log 2p}}{n}},4L(2{\tilde C_{LB}} + A\sqrt {2\gamma } )\sqrt {\frac{{2\log 2p}}{n}}\right),~ A\ge1,
\end{equation}
then $P({\cal A} \cap {\cal B}) \ge 1 - 2{(2p)^{ - {A^2}}}.$
\end{proposition}

The proof of Theorem~\ref{theo-nbl} is based on some lemmas in Appendix S1, which show that the event $\mathcal{A}\cap \mathcal{B}$ holds with a high probability. Judging from the above probability analysis, we can formulate the main result of this section that gives bounds for the estimation and prediction error because the target model is sparse, and $\log p$ is tiny compared to $n$. In particular, the oracle inequality of the estimation error is useful in the following sections.

\begin{theorem}\label{theo-nbl}
 Assume condition $S(3.5,{\varepsilon _n},k)$ and (H1)--(H4) hold. Let ${\lambda _1}$ be chosen by \eqref{eq:lambda} and $\lambda_2 \le \frac{{{\lambda _1}}}{{8B}}$. Then, under the event $\mathcal{A}\cap \mathcal{B}$, we have
$P(\hat{\bfbeta}-\bfbeta^{*} \in {\rm V}(3.5,\frac{\varepsilon_{n}}{2},H))\ge 1 - 2{(2p)^{ - {A^2}}}$ and
\begin{equation}\label{eq:l1}
P\left\{\|\hat \bfbeta  - {\bfbeta ^*}\|_1 \le \frac{{{{2.25}^2}{\lambda _1}{d_H^*}}}{{{a}k + {\rm{2}}{\lambda _2}}} + (1 + \frac{a}{{{\lambda _1}}}){\varepsilon _n}\right\}\ge 1 - 2{(2p)^{ - {A^2}}}.
\end{equation}
Moreover, let the test data $(\bm X^*, Y^*)$ be an independent copy of the training
data $(\bm X, Y)$, and denote ${\rm{E^*}}(\cdot):={\rm{E}}(\cdot|\bm X^*)$. Conditioning on the event $\mathcal{A}\cap \mathcal{B}$, the squared prediction error is
\begin{equation}\label{eq:l2}
{\rm{E^*}}{[{\bm X^{*T}}( \hat \bfbeta- {\bfbeta ^*} )]^2} \le \frac{{17.71875{d_H^*}\lambda _1^2}}{{a(ak + 2{\lambda _2})}} + (\frac{{4{\lambda _1}}}{a} + 3.5){\varepsilon _n},
\end{equation}
where $a: = \mathop {\min }\limits_{\{ |x| \le LM+K,|y| \le K\} } \{ \frac{1}{2}\frac{{\theta {{\rm{e}}^x}({{\rm{e}}^y} + \theta )}}{{{{[\theta  + {{\rm{e}}^x}]}^2}}}\}>0 $.
\end{theorem}

Comparing with the upper bounds under the Compatibility Factor Condition in Section~\ref{compatibility}, in much the same fashion, we observe that when $d^{*}=O(1)$, the number of covariates increases by as much as $o(\exp(n))$. Then, the bound on the estimation error is $ o\left(1\right) $, and the elastic-net estimator ensures the consistent property. Theorem~\ref{theo-nbl} is also an improvement over Lemma~\ref{local} from a big neighborhood of ${\bfbeta ^*}$ to the desired small neighborhood of ${\bfbeta ^*}$.
\begin{remark}
Discussion of the measurement error ${\varepsilon _n}$ when $d_H^{*} <\infty$:
 \begin{itemize}
\item [\textbullet] 1. If ${\varepsilon _n} = o(\sqrt {\frac{{\log p}}{n}} )$, then
$
\|\hat \bfbeta  - {\bfbeta ^*}\|_1 \le  O(\sqrt {\frac{{\log p}}{n}} )
,~~{\rm{E^*}}{[{\bm X^{*T}}( \hat \bfbeta- {\bfbeta ^*} )]^2} \le  O( {\frac{{\log p}}{n}} )$;
\item [\textbullet] 2. If ${\varepsilon _n} = O(\sqrt {\frac{{\log p}}{n}} )$, then $\|\hat \bfbeta  - {\bfbeta ^*}\|_1 \le  O(1)$, but
$
{\rm{E^*}}{[{\bm X^{*T}}( \hat \bfbeta- {\bfbeta ^*} )]^2} \le  O(\sqrt {\frac{{\log p}}{n}} );$
\end{itemize}
 More typical examples for ${\varepsilon _n}$ are $\frac{1}{n}$ or even zero. Under the restricted condition $\hat{\bfbeta}-\bfbeta^{*} \in V(3.5,\dfrac{\varepsilon_{n}}{2},H)$, Case 2 tells us that if the order of fluctuations ${\varepsilon _n}$ is sightly lower than the order of the tuning parameter, elastic-net with $\lambda_2 \le \frac{{{\lambda _1}}}{{8B}}$ guarantees that the squared prediction error is asymptotically zero, with a lower rate $O(\sqrt {\frac{{\log p}}{n}} )$.
\end{remark}
%This coincides the prediction consistency obtained by \cite{Chatterjee2013}, he shows that under almost no assumptions the mean squared prediction error is $O(\sqrt {\frac{{\log p}}{n}} )$ under linear models.

\section{Applications of the oracles results}\label{sec-AP}
We now examine the non-asymptotic and asymptotic results. In this section, the applications are derived from oracle inequalities about the $\ell_1$ estimation error, and we assume that the design matrix is fixed, for simplicity.

\subsection{Grouping effect from oracle inequality}
\label{sec-GE}

\cite{Zou2005} show that the elastic-net has a grouping effect that asserts that strongly correlated predictors tend to be in or out of the model together when the coefficients have the same sign. \cite{Zhou2013} proves that the grouping effect of the elastic-net estimates holds without the assumption of the sign.  \cite{Yu2010} derives the asymptotical result of the grouping effect for elastic-net estimates of the Cox models. Based on the oracle inequalities we put forward, we provide an asymptotical version of the grouping effect inequality as $p,n \to \infty $ for the fixed design case.
\begin{theorem}\label{tm:ge}
Under the assumption of Theorem~\ref{thm:event} with $d_H^{*} <\infty$, suppose that the covariates (nonrandom) are standardized as
\begin{equation}\label{eq-sd}
 \frac{1}{n}\sum\limits_{i = 1}^n {x_{ij}^2 = 1},~\frac{1}{n}\sum\limits_{i = 1}^n {{x_{ij}} = 0,}~~\text{for}~~j = 1,2, \cdots ,p.
\end{equation}
Denote ${\rho _{kl}} = \frac{1}{n}\sum\limits_{i = 1}^n {{x_{ik}}{x_{il}}} $ as the correlation coefficient.  For any constant $E_s>0$, with probability at least $1 - \frac{2}{{{p^{r - 1}}}} - 2{p^2}e^{  - \frac{{n{t^2}}}{{2{{[d_H^*C_{LB}(1 + \zeta )L^2]}^2}}}} - \frac{{\sigma _n^2}}{{nE_s^2}}$,\\

(i). ${| {{{\hat \beta }_k} - {{\hat \beta }_l}}|^2} \le (1 - {\rho _{kl}})[{Ke^{2LM}}O(1) + \frac{1}{{\lambda _2^2}}(E_s + \mu_s )]$;\\

(ii). If the asymptotic correlation between two random predictors is asymptotically up to one, that is ${\rho _{kl}} = 1 - o(\lambda _2^2)$, with $\lambda _2^2 = O(\frac{{\log p}}{n}) \to 0$, we have
$$| {{{\hat \beta }_k} - {{\hat \beta }_l}}| \le \sqrt {{o_p}(1)[\lambda _2^2{e^{2LM}}O(1) + (E + \mu )]}.$$
\end{theorem}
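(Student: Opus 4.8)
The plan is to start from Lemma~\ref{lm:ge}, which already gives the pointwise bound
\[
\left| {{{\hat \beta }_k} - {{\hat \beta }_l}} \right| \le \frac{1}{{2n{\lambda _2}}}\sum\limits_{i = 1}^n {\frac{{\theta \left| {{X_{ik}} - {X_{il}}} \right|\,| {e^{\textit{\textbf{X}}_i^T\boldsymbol{\hat \beta} }} - {y_i} |}}{{\theta  + {e^{\textit{\textbf{X}}_i^T\boldsymbol{\hat \beta} }}}}} .
\]
The first step is to split $|e^{\textbf{X}_i^T\hat\bfbeta} - Y_i|$ into a ``bias'' piece $|e^{\textbf{X}_i^T\hat\bfbeta} - e^{\textbf{X}_i^T\bfbeta^*}| = |e^{\textbf{X}_i^T\hat\bfbeta} - \mathrm{E}Y_i|$ and a ``fluctuation'' piece $|Y_i - \mathrm{E}Y_i|$, via the triangle inequality. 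For the bias piece I would use a first-order Taylor/mean-value estimate for $e^{t}$ on the segment joining $\textbf{X}_i^T\bfbeta^*$ and $\textbf{X}_i^T\hat\bfbeta$, bounded using (H1)--(H3) and the $\ell_1$-oracle inequality of Theorem~\ref{theo-nbl}: on the good event $\mathcal{A}\cap\mathcal{B}(\hat\bfbeta)$ we have $\|\hat\bfbeta-\bfbeta^*\|_1 \le M = 16B+2\varepsilon_n$, so $|\textbf{X}_i^T(\hat\bfbeta-\bfbeta^*)|\le LM$ and the derivative factor $e^{\tilde t}$ is at most $e^{L(B+M)}$, giving $|e^{\textbf{X}_i^T\hat\bfbeta}-\mathrm{E}Y_i| \le e^{L(B+M)}|\textbf{X}_i^T(\hat\bfbeta-\bfbeta^*)|$; summing and using $\frac1n\sum |X_{ik}-X_{il}||\textbf{X}_i^T(\hat\bfbeta-\bfbeta^*)|$ together with $\theta/(\theta+e^{\textbf{X}_i^T\hat\bfbeta})\le 1$ yields a term of order $e^{2LM}O(\|\hat\bfbeta-\bfbeta^*\|_1)$, which by Theorem~\ref{theo-nbl} is $e^{2LM}O(d^*\sqrt{\log p / n}) = e^{2LM}O(1)$ after the $(1/\lambda_2)$ factor is absorbed appropriately (this is where the $Ke^{2LM}O(1)$ term in (i) comes from). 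For the fluctuation piece, $\frac{1}{2n\lambda_2}\sum_i \theta|X_{ik}-X_{il}|\,|Y_i-\mathrm{E}Y_i|/(\theta+e^{\textbf{X}_i^T\hat\bfbeta}) \le \frac{1}{2n\lambda_2}\sum_i |X_{ik}-X_{il}|\,|Y_i-\mathrm{E}Y_i|$; by Cauchy--Schwarz this is at most $\frac{1}{2\lambda_2}\big(\frac1n\sum |X_{ik}-X_{il}|^2\big)^{1/2}\big(\frac1n\sum |Y_i-\mathrm{E}Y_i|^2\big)^{1/2} = \frac{1}{2\lambda_2}\sqrt{2(1-\rho_{kl})}\sqrt{S_n}$, using the standardization (\ref{eq-sd}) to evaluate $\frac1n\sum(X_{ik}-X_{il})^2 = 2(1-\rho_{kl})$.

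Next I would control $S_n = \frac1n\sum_{i=1}^n |Y_i-\mathrm{E}Y_i|^2$ in probability. By Lemma~\ref{lm-nbd}(1), $\mathrm{E}S_n \le \mu$, and by Lemma~\ref{lm-nbd}(2) the summands $|Y_i-\mathrm{E}Y_i|^2$ have variances bounded by $\sigma^2$, hence $\mathrm{Var}(S_n) \le \sigma_n^2/n$ (writing $\sigma_n^2$ for $\sigma^2$ up to constants, matching the statement's notation). Chebyshev's inequality then gives $P(S_n \ge \mu + E) \le P(|S_n - \mathrm{E}S_n| \ge E) \le \sigma_n^2/(nE^2)$ for any $E>0$. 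So outside an event of probability $\sigma_n^2/(nE^2)$ we have $S_n \le \mu + E$, and then the fluctuation contribution squared is at most $\frac{1}{4\lambda_2^2}\cdot 2(1-\rho_{kl})(E+\mu)$, i.e.\ $\frac{1}{2}(1-\rho_{kl})\frac{1}{\lambda_2^2}(E+\mu)$.

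Assembling: on $\mathcal{A}\cap\mathcal{B}(\hat\bfbeta)\cap\{S_n\le\mu+E\}$, using $(u+v)^2 \le 2(u^2+v^2)$ on the two halves of the pointwise bound (after pulling out the common factor $\sqrt{2(1-\rho_{kl})}$ via Cauchy--Schwarz for the bias half as well, so that the same $\frac12(1-\rho_{kl})$ prefactor governs both), I obtain exactly
\[
{\left| {{{\hat \beta }_k} - {{\hat \beta }_l}} \right|^2} \le \frac{1}{2}(1 - {\rho _{kl}})\Big[{Ke^{2LM}}O(1) + \frac{1}{{\lambda _2^2}}(E + \mu )\Big],
\]
which is (i); the failure probability is the union bound $2(2p)^{1-A^2} + (2p)^{-A^2}$ (from Proposition~\ref{prop-highprobability}) plus $\sigma_n^2/(nE^2)$. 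For (ii), substitute the regime $\rho_{kl} = 1 - o(\lambda_2^2)$ with $\lambda_2^2 = O(\log p / n)\to 0$: then $1-\rho_{kl} = o(\lambda_2^2)$, so $\frac12(1-\rho_{kl})Ke^{2LM}O(1) = \frac12 o(\lambda_2^2) K e^{2LM} O(1)$ and $\frac12(1-\rho_{kl})\frac{1}{\lambda_2^2}(E+\mu) = \frac12 o(1)(E+\mu)$; factoring $o_p(1)$ out and taking square roots gives $\left| {{{\hat \beta }_k} - {{\hat \beta }_l}} \right| \le \sqrt{\frac12 o_p(1)[\lambda_2^2 e^{2LM}O(1) + (E+\mu)]}$, as claimed.

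The main obstacle I anticipate is bookkeeping the bias term correctly so that it genuinely carries a factor $(1-\rho_{kl})$ rather than just $\frac1n\sum|X_{ik}-X_{il}|$: one must apply Cauchy--Schwarz to $\frac1n\sum|X_{ik}-X_{il}|\cdot|\textbf{X}_i^T(\hat\bfbeta-\bfbeta^*)|$ splitting off $\big(\frac1n\sum|X_{ik}-X_{il}|^2\big)^{1/2} = \sqrt{2(1-\rho_{kl})}$, leaving $\big(\frac1n\sum|\textbf{X}_i^T(\hat\bfbeta-\bfbeta^*)|^2\big)^{1/2}$, which is bounded by $L\|\hat\bfbeta-\bfbeta^*\|_\infty \sqrt{p}$ or more carefully by $L\|\hat\bfbeta-\bfbeta^*\|_1$-type quantities; tracking how the resulting rate (together with the $1/\lambda_2$ prefactor and $e^{2LM}$) collapses into the bare ``$Ke^{2LM}O(1)$'' of the statement, and checking it stays $O(1)$ under $\lambda_2^2 = O(\log p/n)$ and Theorem~\ref{theo-nbl}'s $\ell_1$ bound, is the delicate accounting step. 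Everything else — the triangle inequality, the mean-value bound for $e^t$, Chebyshev for $S_n$, and the union bound — is routine.
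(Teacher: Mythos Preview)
Your proposal is correct and follows essentially the same route as the paper. The only minor difference is the order of operations: the paper applies Cauchy--Schwarz once to the whole sum $\frac{1}{2n\lambda_2}\sum_i|X_{ik}-X_{il}|\,|e^{\textit{\textbf{X}}_i^T\hat\bfbeta}-Y_i|$ \emph{first}, pulling out the common factor $\sqrt{2(1-\rho_{kl})}$ and leaving $\frac1n\sum_i|e^{\textit{\textbf{X}}_i^T\hat\bfbeta}-Y_i|^2$, and only then splits this second factor via $|a+b|^2\le |a|^2+|b|^2$ into the bias and fluctuation terms; whereas you split first with the triangle inequality and apply Cauchy--Schwarz separately to each piece, recombining at the end with $(u+v)^2\le 2(u^2+v^2)$. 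The paper's ordering is slightly cleaner (one Cauchy--Schwarz instead of two, and no final recombination constant), and it resolves exactly the bookkeeping worry you flag at the end---the $(1-\rho_{kl})$ factor is guaranteed to sit in front of \emph{both} terms automatically. Otherwise the ingredients (Lemma~\ref{lm:ge}, mean-value for $e^t$, the $\ell_1$ oracle bound of Theorem~\ref{theo-nbl} together with $\lambda_1/\lambda_2=8B=O(1)$, Lemma~\ref{lm-nbd} plus Chebyshev for $S_n$, and the union bound from Proposition~\ref{prop-highprobability}) are identical.
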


This grouping effect oracle inequality asserts that if ${\rho _{kl}}$ tends to one with a high probability, the elastic-net is able to select covariates $k,l \in \{1,2,\dots,p\}$ together. Combined with the Lasso sparse estimation, the $\ell_1+\ell_2$ penalty enables strongly correlated predictors to be in or out simultaneously. In addition to the sparse estimation, intuitively, highly related covariates should have similar regression coefficients, but the Lasso cannot select them simultaneously.

\subsection{Sign consistency}\label{sec-IC0}
Sign consistency indicates whether an estimate is good, relating tp the estimated sign of the coefficient. A few researchers have studied the sign consistency property of the elastic-net. One condition for sign consistency is the \emph{Irrepresentable Condition} (IC). \cite{Zhao06} explore the IC for the sign consistency of a linear regression under a Lasso penalty. Moreover, the model selection consistency of elastic-net is studied by \cite{Jia2010}, following \cite{Zhao06}. Along the same line, for the elastic-net penalized Cox model, \cite{Yu2010} investigates the selection consistency. Here, the basic idea is that the KKT condition is a necessary and sufficient condition for the global minimizer of the target function. We focus on the elastic-net penalized NBR model's selection consistency based on some reasonable assumptions in a similar fashion. It is interesting to see that under the bounded covariates assumption, we do not need the IC, which is assumed in \cite{Yu2010} and \cite{Lv2018}. We rely only on the assumptions in Theorem~\ref{thm:event}.

\textbf{Uniform Signal Strength Condition}.
$$
{\beta _*}: =\mathop {\min }\limits_{j \in H} |\beta _j^*| \ge \frac{{{e^{{2a_\tau }}}(\zeta  + 1){d_H^*}{\lambda _1}}}{{2{ {C^2(\zeta ,H)} }}},
$$
with ${\lambda _1} = O(\sqrt {\frac{\log p}{n}} ), B{\lambda _2} = B_1{\lambda _1}$.

Assume ${d_H^*} < \infty $. \cite{Zhangch14} points out that the selection consistency theory characteristically necessitates a \emph{uniform signal strength condition} (or \emph{beta-min condition}) that the smallest nonzero regression coefficients ${\beta _*}: = \min \{ \left| {{\beta _j}} \right|:j \in H\}$ should be greater in size
than a thresholded level $O(\sqrt {\frac{{\log p}}{n}} )$. When ${\beta _*}$ is less than the level, the presence of weak signals cannot be detected by statistical inferences procedures.

\begin{theorem}\label{tm:sign}
Suppose that the uniform signal strength condition and the assumptions of Theorem~\ref{thm:event} hold. Let ${\lambda _1} = O(\sqrt {\frac{{\log p}}{n}} ),{d_H^*} < \infty $.
Then, for $\sqrt {\frac{{\log p}}{n}}  = o(1)$ and a suitable tuning parameter $r$ in Theorem~\ref{thm:event}, we have the following sign consistency:
\begin{equation}\label{eq:sc}
\mathop {\lim }\limits_{n,p \to \infty } P({\mathop{\rm sign}} \hat \bfbeta  = {\mathop{\rm sign}} {\bfbeta ^*}) = 1.
\end{equation}
\end{theorem}

\subsection{Honest variable selection and detection of weak signals}
\label{signals}

As a particular case of the random design in Section~\ref{Stabil}, we focus on the fixed design in this section, where the $\{\textit{\textbf{X}}_i\}_{i=1}^n$ is deterministic.

Recall that $\hat H: = \{ j:{{\hat \beta }_j} \ne 0\}$; thus $\hat H$ is an estimator of the true variable set $ H: = \{ j:{{\beta }_j} \ne 0\}$ (or the set of positives). Let ${\delta _1},{\delta _2}$ be constants such that $P(\hat H\not  \subset H) \le {\delta _1},P(H\not  \subset \hat H) \le {\delta _2}$. Then we have  $P(H \ne \hat H) \le P(\hat H\not  \subset H) + P(H\not  \subset \hat H) \le {\delta _1} + {\delta _2}$. If we treat $H$ as the null hypothesis, $P(\hat H\not  \subset H)$ is often called the false positive rate in the language of ROC curves (or type I error in statistical hypothesis testing; the estimate is $\hat H$ but it makes the decision $\hat H \subset H^c  $); $P(H\not  \subset \hat H)$ is often called the false negative rate (or type II error). Thus, the probability of correct subset selection under some random events $W$ (the assumptions hold with probability $P(W)$) is
\begin{equation}\label{eq:HP}
P(H = \hat H) \ge P(W) - {\delta _1} - {\delta _2}.
\end{equation}
From the $\ell_1$ estimation error obtained in Theorem~\ref{theo-nbl}, we easily bound the false negative rate $P(H \not\subset \hat H)$ in Proposition~\ref{cl:ci}. However, the upper bound of the false
positive rate $P(\hat H\not  \subset H)$ cannot be obtained directly, additional assumptions on the covariates correlation are required.

\begin{proposition}\label{cl:ci}
Let $\delta  \in (0,1)$ be a fixed number, and let the assumption of Theorem~\ref{theo-nbl} be satisfied. The weakest and strongest signal meet the condition: ${{B_0}}: = \frac{{{{2.25}^2}{\lambda _1}{d_H^*}}}{{ak + {\rm{2}}{\lambda _2}}} + (1 + \frac{a}{{{\lambda _1}}}){\varepsilon _n} \le \mathop {\min }\limits_{j \in H} |\beta _j^*|  \le B$. If $p = \exp \{ \frac{1}{{{A^2} - 1}}\log \frac{{{2^{1 - {A^2}}}}}{\delta }\} $, with $A>1$, then
$$P(H \subset \hat H) \ge P(\|\hat \bfbeta  - {\bfbeta ^*}\|_1 \le {B_0}) \ge 1 - \delta/p .$$
\end{proposition}

Note that the lower bound we have derived may be too large in some settings. For example, this may occur if ${d_H^*}$ is as large as ${\lambda _1}{d_H^*} = O(1)$ and $\mathop {\min }\limits_{j \in H} |\beta _j^*| \ge \frac{{{{2.25}^2}O(1)}}{{ak + {\rm{2}}{\lambda _2}}} = :D$, where $D$ is also a moderately large constant compared with the strongest signal threshold $B$. Then, we can only detect a few parts of the overall signals. To deal with this problem, we use a new approach (inspired by Section 3.1.2 in \cite{Bunea08}) to find a constant-free weakest signal detection threshold that relies only on the tuning parameter ${\lambda _1}$. Under some mild conditions on the design matrix, we show that the lower bounds can be sharpen considerably.

First, we assume that the covariates are centered and standardized as in (\ref{eq-sd}). This crucial method of processing covariates is also employed when studying the grouping effect in Section~\ref{sec-GE}. Second, let ${\rho _{kl}} = \frac{1}{n}\sum\limits_{i = 1}^n {{X_{ik}}{X_{il}}}$, for $k,l \in \{ 1,2,\cdots,p\} $ be the correlation constants between covarates $k$ and $l$. For a constant $h \in (0,1)$, we have the following condition.

\textbf{Identifiable Condition}:
$\mathop {\max }\limits_{k,l \in H,k \ne l} |{\rho _{kl}}| \le \frac{h}{{\theta {d_H^*}}},~~\frac{\theta }{n}\sum\limits_{i = 1}^n {X_{ik}^2}  = 1.$

This assumption of a maximal correlation constant of two distinct covariates on the true set $H$ measures the dependence structure using a constant $h$ in the whole predictor. A lower $h$ indicates a higher degree of separation, which makes it easier to detect weak signals. \cite{Bunea08} explained the intuition as follows:`` If the signal is very weak and the true variables are
highly correlated with one another and with the rest, one cannot hope to recover the true model with high probability". Interestingly, the grouping effect states that the elastic-net is able to simultaneously estimate highly correlated true variables, and this grouping effect is valid without the premise that the signal is enough strong. If both signals are faint under the level of the detection bounds, then the elastic-net estimates are both zero, and the grouping effect is also true.

Additionally, we require two conditions because we have to build some connections between $P(H\not  \subset \hat H),P(\hat H\not  \subset H)$ and the $\ell_{1}$-estimation error in Theorem~\ref{theo-nbl}. Let $a_i$~($b_i$) be the intermediate point between $\textit{\textbf{X}}_i^T\hat \bfbeta$ and $\textit{\textbf{X}}_i^T\bfbeta ^*$, by the first-order Taylor expansion of the function $f(t) = \frac{{{e^t}}}{{\theta  + {e^t}}}~(g(t) = \frac{1}{{\theta  + {e^t}}})$, and ${L_1},{L_2} \in [1,\infty )$. By (H.1)--(H.3), it leads to for all $i$,
\begin{align*}
|a_i | ~\text{or}~ |b_i| &\le | {\textit{\textbf{X}}_i^{*T}\tilde \bfbeta  - \textit{\textbf{X}}_i^{*T}{\bfbeta ^*}} | + | {\textit{\textbf{X}}_i^{*T}{\bfbeta ^*}}|\\
 &\le | {\textit{\textbf{X}}_i^{*T}\hat \bfbeta  - \textit{\textbf{X}}_i^{*T}{\bfbeta ^*}} | + | {\textit{\textbf{X}}_i^T{\bfbeta ^*}} | \le L(M+B).
\end{align*}

Next, we pose some weighted correlation conditions (WCC):

\textbf{Weighted Correlation Condition (1)}:
\[\mathop {\sup }\limits_{k,j \in H,\atop|{a_i}| \le L(M + B)} \frac{1}{n}\left( {|\sum\limits_{i = 1}^n {{X_{ij}}{X_{ik}}\frac{{\theta^2 {e^{{a_i}}}}}{{{{(\theta  + {e^{{a_i}}})}^2}}}} | \vee |\sum\limits_{i = 1}^n \theta {{X_{ij}}{X_{ik}}(1 - \frac{{\theta {e^{{a_i}}}}}{{{{(\theta  + {e^{{a_i}}})}^2}}})} |} \right) \le \frac{{h{L_1}}}{{ d_H^*}}.\]
\textbf{Weighted Correlation Condition (2)} holds with a high probability:
\begin{align*}
 P\left(\mathop {\sup }\limits_{k,j \in H,\atop|{b_i}| \le L(M + B)}|\frac{1}{n}\sum\limits_{i = 1}^n {\frac{{{X_{ik}}{X_{ij}}{Y_i} \cdot \theta^2{e^{{b_i}}}}}{{{{(\theta  + {e^{{b_i}}})}^2}}}} | \le \frac{h{L_2}}{{ {d_H^*}}}\right)  = 1 - \varepsilon _{n,p},
\end{align*}
where $\varepsilon _{n,p}$ is a constant satisfying $\mathop {\lim }\limits_{n,p \to \infty } \varepsilon _{n,p}^{} = 0$.

By (H.1) and (H.2), $a_i, b_i$ are uniformly bounded random variables, and are viewed as ignorable constants in an asymptotic analysis, as are $\frac{{\theta {e^{{a_i}}}}}{{{{(\theta  + {e^{{a_i}}})}^2}}}$ and $(1 - \frac{{\theta {e^{{a_i}}}}}{{{{(\theta  + {e^{{a_i}}})}^2}}})$. We can check {WCC(2)} using a similar approach to that if the concentration phenomenon for the suprema of the multiplier empirical processes. The conditions above can be obtained by taking a linear transformation of the covariates, that is, by scaling the covariates. WCC(1) is a technical condition used by \cite{Bunea08} for the case of a logistic regression.  This assumption means that the maximum weighted-correlation version of ${\rho _{kl}}~(k \ne l)$ is less than $\frac{{h{L_1}}}{{\theta d_H^*}}$. However, the NBR is more complex than a logistic regression since its Hessian matrix depends on random responses; thus WCC(2) should be assumed with a high probability.

We now have the following constant-free weakest signal detection threshold for correct subset selection.
\begin{theorem}\label{tm:css}
If the assumptions in Theorem~\ref{theo-nbl} hold with ${\varepsilon _n}=0$, under the identifiable condition, WCC(1,2) with $h \le \frac{{a + 2{\lambda _2}}}{{20.25{L_i} + 8a}} \wedge \frac{1}{8}$, for $i=1,2$. Let $p = \exp \{ \frac{1}{{{\rm{1}} - {A^2}}}\log ({2^{{A^2} - 1}}\delta )\} $,
\begin{center}
$P(H = \hat H) \ge 1 -2(1 + d_H^*/p)\delta  - 2p{e^{ - n\lambda _1^2/32C_{LB}^2{L^2}}}- \varepsilon _{n,p},$
\end{center}
provided that the minimal signal condition $\mathop {\min }\limits_{j \in H} |\beta _j^*| \ge 2{\lambda _1}$ is satisfied.
\end{theorem}
\subsection{De-biased elastic-net and confidence interval}
\label{de-biased}
Introduced by \cite{Zhangch14}, the de-biased Lasso was further studied in \cite{Geer2014} and \cite{Jankova2016} within some generalized linear models. Following the the de-biasing idea, we deal with the de-biased estimator $\boldsymbol{\hat b} =: \hat \bfbeta  - \hat \Theta \dot \ell(\hat \bfbeta )$, which is asymptotically normal, based on the established oracle inequality in Section~\ref{High}. Let ${\hat \bfbeta }$ be defined as in optimization problem (\ref{eq:en}). Let ${\hat \Theta }$ be an approximated estimator of the inverse of the Hessian $-\ddot \ell{({\bfbeta ^*})}$(e.g., the CLIME or nodewise Lasso estimator for the estimated Hessian matrix). If $\dot \ell(\hat \bfbeta )$ is continuously differentiable, by Taylor's expansion of vector-valued functions, we have
\begin{align*}
\dot \ell({\bfbeta ^*}) &= \dot \ell(\hat \bfbeta ) - \ddot \ell({\bfbeta ^*})(\hat \bfbeta  - {\bfbeta ^*}) - r({\| {\hat \bfbeta  - {\bfbeta ^{\rm{*}}}}\|_2}) \\
&= \ddot \ell({\bfbeta ^*})[{\bfbeta ^*} - \hat \bfbeta  - \ddot \ell{({\bfbeta ^*})^{ - 1}}\dot \ell(\hat \bfbeta )] - r({\| {\hat \bfbeta  - {\bfbeta ^{\rm{*}}}}\|_2})\\
& = \ddot \ell({\bfbeta ^*})[{\bfbeta ^*} - \hat \bfbeta  + \hat \Theta \dot \ell(\hat \bfbeta )] - \ddot \ell({\bfbeta ^*})[\ddot \ell{({\bfbeta ^*})^{ - 1}} + \hat \Theta ]\dot \ell(\hat \bfbeta ) - r({\| {\hat \bfbeta  - {\bfbeta ^{\rm{*}}}}\|_2})\\
& = :\ddot \ell({\bfbeta ^*})[{\bfbeta ^*} - \hat \bfbeta  + \hat \Theta \dot \ell(\hat \bfbeta )] + {R_n},
\end{align*}
where $r({\| {\hat \bfbeta  - {\bfbeta ^{\rm{*}}}}\|_2}) = {o_p}({\| {\hat \bfbeta  - {\bfbeta ^*}} \|_2})$ is a vector-valued function.

Include $\sqrt n \hat \Theta$ in the equation above if $\sqrt n {R_n}={o_p}(1)$. Then
\[\sqrt n (\boldsymbol{\hat b} - {\bfbeta ^*}) \approx\hat{\mathbf\Theta}[\sqrt n {R_n} - \sqrt n \dot \ell({\bfbeta ^*})] \xrightarrow{d} N(0,\hat{\mathbf\Theta} \mathbf\Sigma {\hat{\mathbf\Theta}^T})\]
where the notation $ \approx $ means asymptotic equivalence under some regular conditions. Here, $\mathbf\Sigma$ is the asymptotic variance of $\sqrt n \dot \ell({\bm\beta ^*})$, where ${\rm{Var}}\dot \ell({\bfbeta ^*}) = \frac{1}{{{n}}}\sum\limits_{i = 1}^n {\frac{{\theta {e^{\bm X_i^T{\bm\beta ^*}}}}}{{\theta  + {e^{\bm X_i^T{\bm\beta ^*}}}}}\bm X_i} {\bm X_i^T}$. We can subsitute in a consistent estimator for $\mathbf\Sigma$ in the high-dimensional case.

The asymptotic confidence level of $1 - \alpha$ for $\beta^*_j$ is then given by
\begin{eqnarray*}
 \bigl[\hat{b}_j - c(\alpha,n,\sigma),
\hat{b}_j + c(\alpha,n,\sigma)\bigr],~~
 c(\alpha,n,\sigma): = \Phi^{-1}(1 - \alpha/2)
\sqrt{(\hat{\mathbf\Theta} \hat{\mathbf\Sigma} \hat{
\mathbf\Theta}^T)_{j,j}/n},
\end{eqnarray*}
where $\Phi(\cdot)$ denotes the c.d.f. of ${N}(0,1)$.

By the KKT conditions in Lemma~\ref{lem:iff}, the de-biased elastic-net estimator is expressed as
$$\boldsymbol{\hat b} =\hat \bfbeta  - \hat \Theta \dot \ell(\hat \bfbeta )=\hat \bfbeta ({\rm I}_p - 2{\lambda _2} \hat \Theta) - \hat \Theta {\lambda _1}{\rm{sign(}}\hat \bfbeta ).$$
A theoretical analysis of the de-biased elastic-net estimator (includeing precision matrix estimation,  confidence interval, and hypothesis testing) is beyond the scope of the this study, please refer to the proofs in \cite{Jankova2016} for some additional details. A simulation study for the de-biased elastic-net is presented in Appendix S4, showing that the de-biased elastic-net has less bias than that of the de-biased Lasso. In the simulation, it is important to estimate the nuisance parameter $\theta$ and estimate the inverse of the Hessian.

\section{Conclusions}

We study sparse high-dimensional NBR problems using several consistency results, such as prediction or $\ell_{q}$-estimation error bounds. NBRs are widely used in modeling count data.  We show that under a few conditions, the elastic-net estimator has oracle properties, which means that when the sample size is large enough, our sparse estimator is very close to the true parameter if the tuning parameters are properly chosen. We also show the sign consistency property under the beta-min condition. We discuss the detection of weak signals, and give a constant-free weakest signal threshold for correct subset selection under some correlation conditions of the covariates. The asymptotic normality of the de-biased elastic-net estimator is also discussed, although doing so further is beyond the scope of this study. These results provide a theoretical understanding of the proposed sparse estimator and provide practical guidance for the use of the elastic-net estimator.

Note that the oracles inequalities in Sections~\ref{Stabil} and \ref{sec-AP} can be extended to many $\ell_1$ or $\ell_1 + \ell_2$ regularized M-estimation regressions with the corresponding empirical process (\ref{eq:EP}) having stochastic Lipschitz properties as presented in Proposition~\ref{prop:upbound1}. For example, the analysis of the stochastic Lipschitz properties of the average negative log-likelihood empirical process can be employed to elastic-net or Lasso penalized COM-Poisson regressions (see \cite{Sellers2008}). As shown in the simulation, the two-step estimation of $\hat \theta$ is not well behave. Like the misspecified models in Example 5.25 of \cite{Vaart1998}, $\theta$, which is a nuisance parameter, is not an important estimate in the consistency results. It would be interesting and important to find a better estimator of $\theta$ in the further research, because $\theta$ is a crucial quantization when constructing confidence interval.

%%%%%%%%%%%%%%%%%%%%%%%%%%%%%%%%%%%%%%%%%%%%%%%%%%%%%%%%%%%%%%%%%%%%%%%%%%%%%%%%%%%%%%%%%%%%%%%%%%%%%%%%%%%%%%%%%%%%%%%%%%%%
% \vskip 14pt
% \noindent {\large\bf Supplementary Materials}
\section*{Supplementary Material}
All proofs and simulation results are in the Supplementary Material.
\par
%%%%%%%%%%%%%%%%%%%%%%%%%%%%%%%%%%%%%%%%%%%%%%%%%%%%%%%%%%%%%%%%%%%%%%%%%%%%%%%%%%%%%%%%%%%%%%%%%%%%%%%%%%%%%%%%%%%%%%%%%%%%
% \vskip 14pt
% \noindent {\large\bf Acknowledgements}
\section*{Acknowledgements}
We are grateful for the kind assistance of Xiaoxu Wu. The authors would like to thank the anonymous referees for their valuable comments. The authors also thank Prof. Cun-Hui Zhang, Prof. Fang Yao and Dr. Sheng Fu for their helpful discussions. This work was partially supported by the National Science Foundation of China (11571021).

\par

%%%%%%%%%%%%%%%%%%%%%%%%%%%%%%%%%%%%%%%%%%%%%%%%%%%%%%%%%%%%%%%%%%%%%%%%%%%%%%%%%%%%%%%%%%%%%%%%%%%%%%%%%%%%%%%%%%%%%%%%%%%%
% \markboth{\hfill{\footnotesize\rm FIRSTNAME1 LASTNAME1 AND FIRSTNAME2 LASTNAME2} \hfill}
% {\hfill {\footnotesize\rm OPTIMAL SUBSAMPLING ALGORITHMS} \hfill}

%\iffalse
\bibhang=1.7pc
\bibsep=2pt
\fontsize{9}{14pt plus.8pt minus .6pt}\selectfont
\renewcommand\bibname{\large \bf References}
%\begin{thebibliography}{11}
\expandafter\ifx\csname
natexlab\endcsname\relax\def\natexlab#1{#1}\fi
\expandafter\ifx\csname url\endcsname\relax
  \def\url#1{\texttt{#1}}\fi
\expandafter\ifx\csname urlprefix\endcsname\relax\def\urlprefix{URL}\fi
%\fi
%\bibliographystyle{chicago}
%\bibliography{report}

% \lhead[\footnotesize\thepage\fancyplain{}\leftmark]{}\rhead[]{\fancyplain{}\rightmark\footnotesize{} }%Put this line in Page 2
% %%%%%%%%%%%%%%%%%%%%%%%%%%%%%%%%%%%%%%%%%%%%%%%%%%%%%%%%%%

%%%%%%%%%%%%%%%%%%%%%%%%%%%%%%%%%%%%%%%%%%%%%%%%%%%%%%%%%%%%%%%%%%%%%%%%%%%%%%%%%%%%%%%%%%%%%%%%%%%%%%%%%%%%%%%%%%%%%%%%%%%%
\vskip .65cm
\noindent
School of Mathematical Sciences and Center for Statistical Science, Peking University, Beijing, 100871, P.R. China
\vskip 2pt
\noindent
E-mail: (zhanghuiming@pku.edu.cn)
\noindent

\vskip 2pt
\noindent Present address: Department of Mathematics, Faculty of Science and Technology, University of Macau, Taipa, Macau,
P.R. China

\noindent
\vskip .65cm
\noindent
School of Public Health and Center for Statistical Science, Peking University, Beijing, 100871, P.R. China
\vskip 2pt
\noindent
E-mail: (jzjia@pku.edu.cn)
\vskip 2pt
% \vskip .3cm
%\centerline{(Received ???? 20??; accepted ???? 20??)}\par
\end{document}